\documentclass{article}

\usepackage{arxiv}

\usepackage[utf8]{inputenc} % allow utf-8 input
\usepackage[T1]{fontenc}    % use 8-bit T1 fonts
\usepackage{hyperref}       % hyperlinks
\usepackage{url}            % simple URL typesetting
\usepackage{booktabs}       % professional-quality tables
\usepackage{amsfonts}       % blackboard math symbols
\usepackage{nicefrac}       % compact symbols for 1/2, etc.
\usepackage{microtype}      % microtypography
\usepackage{lipsum}
\usepackage{graphicx}

\usepackage{amsmath}
\usepackage{amsthm}
\usepackage{bbm}
\usepackage{algorithm}
\usepackage{algorithmic}
\usepackage{multirow}
\usepackage{caption}
\usepackage{subcaption}
\newtheorem{theorem}{Theorem}
% \graphicspath{ {./images/} }

\title{A Unified Framework for Online Trip Destination Prediction}

\author{Victor Eberstein  \\
                Chalmers University of Technology\\
                SE-412 96 Gothenburg, Sweden\\
                \texttt{vicebb@chalmers.se}\\
           \And
            Jonas Sjöblom \\
                Chalmers University of Technology\\
                SE-412 96 Gothenburg, Sweden\\
                \texttt{jonas.sjoblom@chalmers.se}\\
           \And
                Nikolce Murgovski  \\
                Chalmers University of Technology\\
                SE-412 96 Gothenburg, Sweden\\
                \texttt{nikolce.murgovski@chalmers.se}\\
           \And
                Morteza Haghir Chehreghani  \\
                Chalmers University of Technology\\
                SE-412 96 Gothenburg, Sweden\\
                \texttt{morteza.chehreghani@chalmers.se}\\
}

\begin{document}

\maketitle

\begin{abstract}
Trip destination prediction is an area of increasing importance in many applications such as trip planning, autonomous driving and electric vehicles.
Even though this problem could be naturally addressed in an online learning paradigm where data is arriving in a sequential fashion, the majority of research has rather considered the offline setting.
In this paper, we present a unified framework for trip destination prediction in an online setting, which is suitable for both online training and online prediction.
For this purpose, we develop two clustering algorithms and integrate them within two online prediction models for this problem.

We investigate the different configurations of clustering algorithms and prediction models on a real-world dataset.
We demonstrate that both the clustering and the entire framework yield consistent results compared to the offline setting.
Finally, we propose a novel regret metric for evaluating the entire online framework in comparison to its offline counterpart.
This metric makes it possible to relate the source of erroneous predictions to either the clustering or the prediction model.
Using this metric, we show that the proposed methods converge to a probability distribution resembling the true underlying distribution with a lower regret than all of the baselines.

% Insert your abstract here. Include keywords, PACS and mathematical
% subject classification numbers as needed.
\keywords{Trip destination prediction \and  clustering \and online learning \and Bayesian  prediction \and expert model \and regret analysis}
% \PACS{PACS code1 \and PACS code2 \and more}
% \subclass{MSC code1 \and MSC code2 \and more}
\end{abstract}

\section{Introduction}
In today's society almost all newly produced cars are equipped with some sort of built-in navigation system using GPS-based data.
Consequently, new research areas have emerged in data analytics for transport systems due to the large amount of geospatial data being shared over cellular networks.
One such area is future route and destination prediction, which has been considerably focused on in the last decade.
Being able to accurately predict the future location and/or route of a vehicle has some obvious advantages for individual vehicles, e.g., avoiding traffic congestion, estimating travel time and electric range, improved personalization and adaptation, etc.
For the same reason, it can also be beneficial for the Traffic Management Centers (TMC) to better estimate future traffic situations.

Another potential application area for future route and destination prediction is in designing energy management systems for hybrid vehicles, i.e., to control the power split between the internal combustion engine and the electrical machine.
Optimal control of hybrid vehicles is a non-trivial task when a trip (or sequence of trips) is exceeding the all-electrical range.
Several studies have been performed comparing the simple electric vehicle/charge sustaining (EV/CS) strategy with blended strategies, i.e., strategies that continuously blend the energy from the battery and the fuel in such a way that the battery is depleted at the very end of a trip.
In comparison to EV/CS, such strategies have been shown to yield a significant reduction of
the fuel consumption, e.g., up to 20\% as reported in \cite{KUM2010258}.
The main drawback of blended strategies is that they require a priori information about the trip in order to find the best possible strategy, e.g., destination, route, travel time, etc.
However, such information is not necessarily available.
The authors in \cite{6314910} noted that if a trip or route is recognized from a driving history, then one can employ a blended strategy for that specific trip.
If that is not possible for the specific trip, one can always revert to the simple EV/CS strategy.
The model they presented was able to reduce the fuel consumption by 1.5\%, without any a priori information in comparison to the EV/CS strategy.
In \cite{8103917}, the authors use driving histories in order to optimize the EMS and  learn a look-up table for the controller parameters based on frequently traversed routes.
The result is an EMS that consumes only 2.5\% more energy than the corresponding posterior global optimal result.

Thereby, in this paper, we study prediction of trip destination in different settings.
We denote the full trip history of an individual user by $X_u$.
An individual trip  will be referred to as $x_u(i)$, for $i = 1,\ldots, N_u$, where $N_u$ is the length of the trip history, i.e. the number of trips available for that user.
Each trip consists of GPS-coordinates, i.e. the latitude and longitude pair of the source and destination.
Let $X_u(i) = \{x_u^s(i), x_u^d(i)\}$, where $x_u^s(i)$ and $x_u^d(i)$ represent the source and the destination of trip $x_u(i)$.
One can also form $X_u^s$ and $X_u^d$, which are the concatenation of all sources and destination in the entire trip history of user $u$.
In its simplest form, the task can then be specified to predict $x_u^d(i)$ from $x_u^s(i)$ for future trips $x_u(i)$.

Trip destination prediction can be considered in an online or an offline setting, which are the two very different ways of approaching the task.
In the offline setting, one would train a model on the full trip history $X_u^s$ and $X_u^d$ in order to predict all future trips.
However, in the online setting one does not assume to have access to the full trip history, and instead considers the trips arriving in a sequential or incremental fashion.
In other words, the prediction of $x_u^d(i)$ from $x_u^s(i)$ is made after having observed all trips $X_u(i')$ for $i' < i$ and once the actual trip $X_u(i)$ is observed the model is immediately updated.
The problem in this setting then becomes to perform as good as possible in comparison to the offline model trained on the full trip history.

As will be discussed in Section 2, several previous works study trip destination problem from different aspects. However, those studies separate the prediction model from the formation of the prediction space, i.e., the clustering of candidate locations. In particular, the clustering is not considered in the evaluation of the final model. In addition, almost all of the previous works assume an offline setting where they  investigate the  model after it has been trained on a full dataset.

In this paper, we develop and investigate a unified framework fully suitable for online training and prediction.
For this purpose, two novel online clustering algorithms are used with two different online prediction models, which enable the entire framework to be investigated in an online setting.
The two clustering algorithms are adaptations of an incremental variant of the DBSCAN clustering algorithm  \cite{ester1996density}.
Instead of storing all previously seen points, the two proposed algorithms store centroids belonging to the different clusters as well as the outlier points.
The first prediction model is a probabilistic Bayesian model, using sequential updates for its parameters.
The second prediction model is an adaptation of an expert model, where the set of available experts is dynamic.
Both models yield a distribution over the possible destinations, which can be compared to the true distribution obtained by the offline model.

The different configurations of clustering algorithm and prediction model are investigated on a real-world dataset.
At first, the clustering algorithms are evaluated using supervised clustering metrics, where the offline DBSCAN is considered the true clustering.
Secondly, the entire framework is evaluated using accuracy, which is traditionally used for trip destination prediction.
It is shown that these configurations yield consistent results to the offline model on unseen data.
Furthermore, the online framework is evaluated using a novel metric based on the Hellinger distance, such that it is possible to relate the source of erroneous predictions to either the clustering or the prediction model.
From this, one can observe that the learning improves as more trips are added.

The rest of the paper is organized as follows.
In Section 2, we review the related works and position our contribution w.r.t. them.
In Section 3, we introduce and formalize an offline methodology to serve as the baseline for the proposed online models.
It consists of clustering candidate locations and estimating transition probabilities between the discovered locations.
Section 4 extends and adapts both the clustering and the prediction model in the offline methodology to an online setting, i.e., the case where data arrives sequentially over time.
In addition, we introduce an evaluation metric to measure the performance of the entire pipeline.
In Section 5, we conduct the experiments and evaluate our proposed models on  real-world datasets.
Finally, in Section 6, we conclude the paper and discuss the future directions.

\section{Related work}
To the best of our knowledge, \cite{ashbrook2003using} is one of the first works on the topic of predicting user destination from historical GPS-data.
This model can be split into two parts:
(i) clustering the raw GPS-points into candidate destination, and
(ii) using a Markov model with the candidate locations as states to predict the next destination.
This methodology, wherein the candidate locations are first found from historical GPS-logs and later used in a graphical model (some variants of Markov models) has since been adopted by a number of subsequent works \cite{alvarez2010trip,simmons2006learning,panahandeh2017driver,zong2019trip}.

In \cite{simmons2006learning}, a Hidden Markov Model (HMM) is built using the links obtained by mapping GPS positions to a map database.
For an on-going trip, this model can be used to find both the next road link and the next sequence of road links that are most likely.
In other words, it can be used to predict the most likely future route/destination of an on-going trip.
In order to make the model independent of a map database, the authors in \cite{alvarez2010trip} suggested to consider support points along traversed routes, e.g., intersections that could differentiate between possible destinations.
Using the support points as observable states and the candidate locations as hidden states, an HMM is built and used to predict the destination of trips.
Other approaches that have been investigated include similarity matching between the current trip and previously stored trips.
In \cite{laasonen2005route}, the authors suggest to predict the route by applying string matching techniques to a database of stored routes.
The authors in \cite{froehlich2008route} present a similarity algorithm to cluster similar trips using hierarchical clustering and then predict the most likely route/destination.

A more recent work is \cite{panahandeh2017driver}, wherein a probabilistic Bayesian model conditioned on the origin and current road link is used to predict the future destination.
In \cite{epperlein2018bayesian} the authors develop a Bayesian framework to model route patterns, and present a model based on Markov chains to probabilistically predict the route/destination of an ongoing trip.
Perhaps the closest related work is \cite{filev2011contextual}, wherein a k-nearest neighbors (kNN) model is used to find the most important destinations and a Markov chain model is employed for predictions.
Both the available destinations and the Markov chain transitions are updated in an online setting.
However, they are not considering the clustering method as part of their online framework.
Subsequently, their model is only evaluated using accuracy on repeat trips, i.e., trips that are never repeated are removed in a preproccesing step. They do not use an explicit destination clustering method,  their online model is simply based on counting, and in their evaluations they use only accuracy.

There are also a significant number of works using slightly different types of data.
In \cite{asehuman12}, the authors look at two online learning algorithms applied to a Bayes net model to predict destinations of users.
However, the data that they consider is based on voluntary check-ins from social networking websites.
There has also been extensive work on public transport and taxi data, e.g., in 2015 the method  \cite{taxi} won the \textit{ECML/PKDD 15: Taxi Trajectory Prediction} on Kaggle.
The proposed model consists of a clustering model to find candidate locations, followed by a recurrent neural network (since it was trained on initial trajectories of trips).
The destinations were predicted using a weighted average of the softmax output of the different cluster centroids.
Taxi services have also been phrased as a reinforcement problem, see for example \cite{optimizetaxi}, where the authors use the total profit of a taxi driver as the reward function, the location and status of the taxi as the state space, and the choices of operating the taxi as the action space.
Finally, the authors in \cite{chen} propose an offline trip destination model for public transport based on trip histories in an evaluation set and the neighboring user trips.

We also note that there are problem formulations that are similar, or closely related, to trip destination prediction.
One such example is trip purpose prediction, see, e.g. \cite{ERMAGUN201796,CHEN2010830,XIAO2016447}, where one tries to predict the purpose of a trip, for instance shopping, restaurant, education, etc.
Another example is when trip destination prediction is used as a sub-problem, e.g., the authors in \cite{gathering} use trip destination prediction of taxi data in order to forecast gathering events.
We also note that destination prediction in general can be employed by trip planning methods, e.g., the methods described in \cite{niklas,niklas2} for navigation and in \cite{onlineMinimax} for bottleneck identification, which propose effective plans for given sources and destinations.

All of these works assume that the prediction model is separated from clustering candidate locations which represent the prediction space.
Thus, they consider the clustering as a preprocessing step, and then narrow down the trips that are being considered to only the frequent trips.
Even in the few cases where the clustering has been mentioned and explored explicitly, it has not been reflected in the evaluation of the final model.
Furthermore, almost all of the previous works assume an offline setting for their model, fully investigating their model after it has been trained on a full dataset. In this paper,  we develop  a unified online learning  framework  that i) takes into account the creation and evolution of clusters in a consistent way, and ii) learns the model and the parameters in an online fashion.
For this purpose, we propose  two novel online clustering algorithms to be  used with two different online prediction models, and investigate the entire framework in  a fully  online setting.

\section{Offline trip prediction}
In this section, we introduce an offline mode for prediction of trip destinations.
The offline model will serve as the baseline, to which the proposed online model will be compared.
Our model is adopted from the methodology proposed by \cite{ashbrook2003using}, which suggests to first use clustering to find candidate locations and then estimate the transition probabilities between each of the found locations.

\subsection{Clustering}
For clustering, we use DBSCAN \cite{ester1996density} which is a simple yet effective density based method.
It does not require to fix the number of clusters in advance and is also robust to outliers.
It has also been shown to work well on low dimensional data.
Since the data being considered in this study is 2-dimensional with latitude and longitude features, DBSCAN is considered to be appropriate.
Furthermore, this method also has the advantage of making the clusters easy to interpret in terms of the choice of parameter values.
The algorithm takes two parameters, $\epsilon$ which is the minimum distance between the points to be considered inside the same neighborhood, and $m$ which is the minimum number of points required inside a neighborhood to form a cluster.
Since the trip history $X_u$ consists of GPS-coordinates, a suitable distance metric is the Haversine distance.
The Haversine distance \cite{robusto1957cosine} between two points, $x$ and $y$, consisting of latitude and longitude features is defined as
\begin{align*}
        \textsc{distance}(x,y) =
         2\arcsin\left(\sqrt{f_1( x_{\text{lat}},y_{\text{lat}})+f_2(x_{\text{long}}, y_{\text{long}})}\right).
\end{align*}
Here we have used the short-hand notation:
\begin{align*}
    f_1(x,y) = \sin^2\left(\frac{x-y}{2}\right), \quad
    f_2(x,y) = \cos(x)\cos(y)\sin^2\left(\frac{x-y}{2}\right).
\end{align*}

Ideally, it should not matter whether one chooses to cluster the sources $X_u^s$ or $X_u^d$, since the end of one trip corresponds to the beginning of another.
However, one recurring problem when working with GPS-data is the initial time it takes for the GPS receiver to acquire the satellite signal.
This delay is usually in the range between 10 to 60 seconds, but could be as long as a couple of minutes, which makes the source of every trip more uncertain than the destination.
Thus, it is reasonable to use $X_u^d$ to form the clusters and find the candidate locations.

Clustering all destinations in $X_u^d$ using DBSCAN returns the cluster labels $C_u^d$ for the destinations, where $c_u^d(i)$ for $i = 1\ldots, N_u$ is used to denote the label of an individual trip destination.
Further, let the set of all cluster labels be denoted by $M_u = \{0, \ldots, K-1\}$, i.e. $c_u^d(i) \in M_u\cup\{-1\}$, where  $c_u^d(i) = -1$ corresponds to an outlier.
One still needs to find the corresponding cluster labels of the source of each trip, i.e. $C_u^s$, even though they were not used in the clustering procedure.
For the reasons already mentioned (robustness of the source), one cannot simply assign the source to a cluster if the distance to a dense neighborhood is less than $\epsilon$.
Instead, whenever there are two or more clusters, we compute $C_u^s$ according to Alg. \ref{alg:source_cluster}, which essentially assigns a cluster label to $c_u^s(i)$ if the closest cluster is $\delta$ times closer than the second closest cluster, and otherwise it is considered an outlier.

\begin{algorithm}
\caption{Find the corresponding cluster label $c_u^s(i)$ of the source of a trip $x_u^s(i)$.}
\textbf{Input: }{Set of cluster labels $M_u$, source $x_u^s(i)$} \\
\textbf{Parameters: }{distance threshold $\delta$}
    \begin{algorithmic}
        \label{alg:source_cluster}
        \STATE Find two closest clusters $c_1, c_2 \in M_u$ with distances $d_1 < d_2$ from $x_u^s(i)$
        \IF{$d_2/d_1 > \delta$}
            \STATE return $c_1$
        \ELSE
            \STATE return $-1$
        \ENDIF
    \end{algorithmic}
\end{algorithm}

\subsection{Bayesian prediction model}
After the clustering procedure, the entire trip history can be represented with the cluster labels $C_u^s$ and $C_u^d$ for the sources and destinations respectively.
All the transitions made by the user $u$ are $c_u^s(i) \to c_u^d(i)$ for $i = 1,\ldots, N_u$, where $c_u^s(i), c_u^d(i) \in M_u\cup\{-1\}$.

First of all, consider the user-specific distribution of the destination $p(c_u^d)$, where $p(c_u^d) = k$ represents the probability of the destination being $k \in M_u\cup\{-1\}$.
Next, consider the distributions when the destination is conditioned on the source, i.e. $p(c_u^d | c_u^s)$.
Now, $p(c_u^d = k | c_u^s = j)$ with $j,k \in M_u\cup\{-1\}$, represents the probability of the destination being $k$ given that the trip starts at $j$.

We assume that the distribution $p(c_u^d)$ and all of the conditional distributions $p(c_u^d | c_u^s)$ follow categorical distributions.
That is, $p(c_u^d) \sim \text{Cat}(K+1, \Lambda)$ with event probabilities $\Lambda = (\Lambda_{-1},\Lambda_{0}, \ldots, \Lambda_{K-1})$, and  $p(c_u^d | c_u^s = j) \sim \text{Cat}(K, \lambda_j)$ with event probabilities $\lambda_j = (\lambda_{j,-1}, \lambda_{j,0}, \ldots, \lambda_{j,K-1})$.
The event probabilities are interpreted as $\Lambda_k = p(c_u^d = k)$ for $p(c_u^d)$, as well as $\lambda_{jk} = p(c_u^d = k | c_u^s = j)$ for $p(c_u^d | c_u^s)$.
In the following subsection, we describe how to estimate the parameters of $\Lambda$ and $\lambda_j$ for $j=-1, 0\ldots, K-1$.

\paragraph{Parameter estimation.}
We adopt a Bayesian approach to estimate the parameters.
In this approach, we use a prior distribution over $\Lambda$ and $\lambda_j$ for all $j=-1,0,\ldots,K-1$, after which Bayes' rule can be used to update the posterior distribution.
Starting with $\Lambda$, assume that $p(\Lambda | \beta)$ follows a Dirichlet distribution with some concentration parameters $\beta = (\beta_{-1},\beta_{0}, \ldots,  \beta_{K-1})$.
Using Bayes' rule, the posterior $p(\Lambda | C_u^d, \beta)$ can then be computed as:
\begin{align*}
    p(\Lambda | C_u^d, \beta)= \frac{p(C_u^d | \Lambda)p(\Lambda | \beta)}{p(C_u^d | \beta)},
\end{align*}
where $\beta$'s are the hyperparameters.
The two terms in the numerator, i.e. $p(C_u^d | \Lambda)$ and $p(\Lambda | \beta)$, are often referred to as the likelihood and the prior distribution respectively.
On the other hand, the denominator $p(C_u^d | \beta)$ is the marginal likelihood, or evidence, which refers to the distribution once the parameter $\Lambda$ has been marginalized out.

Using the fact that the Dirichlet distribution is the conjugate prior to the categorical distribution, it holds that $p(\Lambda | C_u^d, \beta) \sim \text{Dir}(K+1, n + \beta)$, where $n = (n_{-1}, n_{0}, \ldots, n_{K-1})$ and $n_j$ is the number of trips ending in $j$.
Essentially, the hyperparameters $\beta$ can be treated as pseudocounts in the model.
In other words, the event probabilities $\Lambda$ are set to
\begin{align*}
    \Lambda_{j} = \frac{n_{j} + \beta_{j}}{\sum_j n_{j} + \beta_{j}}
\end{align*}
for $j=-1,0,\ldots, K-1$.

Estimating $\lambda_j$ for $j=-1,0\ldots, K-1$ follows a similar procedure as when estimating $\Lambda$.
Given a single value of $j$, assume that $p(\lambda_j | \alpha_j)$ follows a Dirichlet distribution with hyperparameters $\alpha_j = (\alpha_{j,0},\ldots, \alpha_{j,K-1})$.
According to Bayes' rule, the posterior $p(\lambda_j |C_u^{\{j\}}, \alpha_j)$ is determined by
\begin{align*}
    p(\lambda_j |C_u^{\{j\}}, \alpha_j)= \frac{p(C_u^{\{j\}} | \lambda_j)p(\lambda_j | \alpha_j)}{p(C_u^{\{j\}} | \alpha_j)},
\end{align*}
where $C_u^{\{j\}}$ is used to denote all trips starting in $j$.
Using conjugacy, it holds that $p(\lambda_j |C_u^{\{j\}}, \alpha_j) \sim \text{Dir}(K+1, \hat{n_j} + \alpha_i)$, where $\hat{n_j} = (n_{j,-1},n_{j,0}, \ldots, n_{j,K-1})$ and $n_{jk}$ is the number of trips going from $j$ to $k$.
Therefore, the event probabilities are
\begin{align*}
    \lambda_{jk} = \frac{n_{jk} + \alpha_{jk}}{\sum_j n_{jk} + \alpha_{jk}},
\end{align*}
and once again $\alpha_{jk}$ can be interpreted as pseudocounts.

We note that setting the hyperparameters to zero in a distribution will yield the maximum likelihood estimate of the corresponding event probabilities.
However, this would imply that all the transitions not present in the dataset will have a zero probability of occurring in the model, e.g. $\alpha_{jk} = 0$ and $n_{jk} = 0$ would result in $\lambda_{jk} =  p(c_u^d = k | c_u^s = j) = 0$.

\section{Online trip prediction}
In this section, we extend the offline prediction model to an online setting.
The trip history of a user, $X_u$, will now arrive sequentially, one trip at a time, which the offline model cannot handle.
The offline clustering requires the entire $X_u$'s to find the candidate location, and the prediction model requires the cluster labels $C_u$ to estimate the transition probabilities.
Thus, to make the entire pipeline online, both the clustering and the prediction model have to be adapted appropriately.

\begin{figure}[ht]
    \centering
    \includegraphics[width=1\columnwidth]{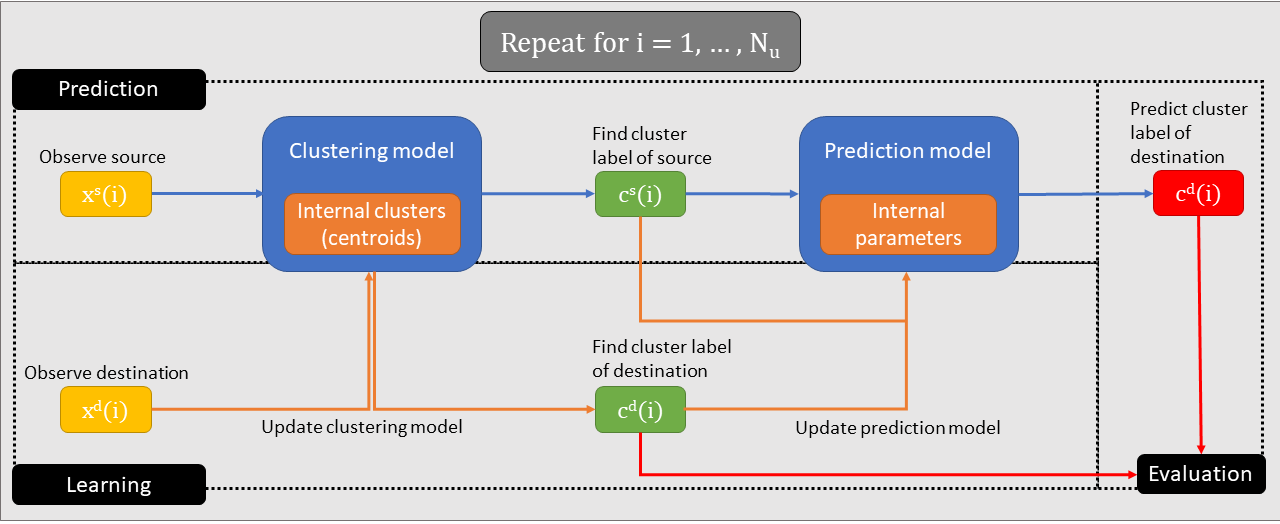}
    \caption{An overview of the proposed online framework, which includes a clustering and a prediction model, both updated incrementally. It is organized in three components: i) prediction, ii) learning, and iii) evaluation.}
    \label{fig:overview}
\end{figure}

The proposed online framework is shown in Fig. \ref{fig:overview}.
For each new trip, we use the clustering model to find the cluster of the source (starting point) of the trip.
The output from the clustering is then used to predict the destination using the prediction model.
Next, the actual destination is observed after the trip takes place and is used to update both the clustering and the prediction model (and their parameters).
Finally, the predicted destination is evaluated in comparison to the actual destination. Therefore, the framework consists of three main components: i) prediction, ii) learning, and iii) evaluation.
These steps are then repeated for every new trip that is observed.

\subsection{Online clustering}
Online variants of different clustering algorithms have already been proposed in the literature, e.g. an incremental variant of DBSCAN is proposed in \cite{ester1998incremental}.
Inspired by this incremental adaptation of DBSCAN, we propose two different variants of a DBSCAN to cluster the points online.
The main difference is that instead of storing the core points, these variants store core centroids and keep track of the number of points within a specified radius.
Both variants take the same parameters as the original DBSCAN algorithm, i.e. $m$ and $\epsilon$, representing the minimum number of points in a cluster and the minimum distance threshold respectively.

For every new point $x_u^d(i)$ that arrives, the point is clustered and $c_u^d(i)$ is obtained as a result of the clustering.
In order to determine $c_u^s(i)$, we employ the same approach as in the offline setting.
That is, in order to assign a label, the closest cluster has to be at least $\delta$ times closer than the second closest one, and otherwise it is considered an outlier.

\subsubsection{Online DBSCAN 1}
The first variant presented in Alg. \ref{alg2} takes an additional parameter $r$, which is used to determine the radii of the centroids that are stored.
When $r \to 0$, the centroids will naturally become points, in which case the method will behave as the incremental adaptation of DBSCAN in \cite{ester1998incremental}.
The centroids are stored as $(c_q, n_q, l_q)$, where the elements are the centroid itself, the number of points it contains, and the cluster label of the centroid respectively.
All non-clustered points are stored as $(x_s, n_s, t_s)$, where the elements are the point itself, the number of neighbors, and the timestamp of the point respectively.

\begin{algorithm}
\caption{Online clustering - V1}
\textbf{Input: }{New point $x$, and timestamp $t$} \\
\textbf{Parameters: }{distance threshold $\epsilon$, minimum number of points in a cluster $m$, radii fraction $r$ (i.e. $r\cdot\epsilon$ is the radii of the centroids)} \\
\textbf{Stored: }{non-clustered points $P = \{(x_s, n_s, t_s)\}_{s=1}^S$, \\ centroids $C = \{(c_q, n_q, l_q)\}_{q=1}^{Q}$}
\begin{algorithmic}[1]
\label{alg2}
\STATE $P_x = \{(x_s, n_s, t_s) \in P \ |\ \textsc{Distance}(x, x_s) <  \epsilon \}$
\FOR{$(x_s, n_s, t_s) \in P_x$}
    \STATE $n_s \leftarrow n_s + 1$
\ENDFOR
\STATE $q^* = \underset{q}{\arg\min} \  \textsc{Distance}(x, c_q)$
\IF{$\textsc{Distance}(x, c_{q^*}) <  r\cdot \epsilon$}
    \STATE $n_{q^*} \leftarrow n_{q^*} + 1$
\ELSE
    \STATE $N_x = \{n_q \ |\ (c_q, n_q, \cdot) \in C, \textsc{Distance}(x, c_q) <  (r+1)\cdot \epsilon \}$
    \STATE $n_x = |P_x|+\sum_{n_q \in N_x} n_q$
    \STATE $p_x = (x,n_x,t)$
    \STATE $P \leftarrow P\cup\{p_x\}$
\ENDIF
\STATE $\textsc{CheckForNewCentroids}(P_x\cup\{p_x\})$
\STATE $\textsc{DeleteOldPoints}(t)$
\end{algorithmic}
\end{algorithm}

Whenever a new point arrives, at first, we check for neighboring points and add the new point as a neighbor to the existing points (line 1-4).
Now, if the closest centroid contains the point, then the new point is simply added to the existing centroid (line 5-7).
Otherwise, the number of neighbors of the new point is computed, after which the new point is added to the non-clustered points (line 9-12).

Next, the function $\textsc{CheckForNewCentroids}(P_x\cup\{p_x\})$ looks at all non-clustered points in $P_x$ as well as the possible new point $p_x$ and finds the points that should be upgraded to a centroid (line 14).
This is the case for all points where $n_s \geq m-1$.
Depending on whether the neighbors are part of an existing centroid or not, three different scenarios can occur:
\begin{enumerate}
    \item Only neighbors in $P$: Add the centroid to a new cluster.
    \item Neighbors in a single cluster: Add the centroid to the existing cluster.
    \item Neighbors in multiple clusters: Merge the existing clusters and add the centroid to the resultant cluster.
\end{enumerate}
Finally, all the points that are too old according to the function $\textsc{DeleteOldPoints}(t)$ are removed from $P$ (line 15). This function is elaborated further in the experiments section.

\subsubsection{Online DBSCAN 2}
The second variant, shown in Alg. \ref{alg3}, does not have any additional parameters and the stored elements are slightly different.
All non-clustered points are still stored as $(x_s, n_s, t_s)$.
However, the centroids are now stored as $(c_k, n_k, \epsilon_k)$, where the first two elements are still the centroid and the number of points it contains, but the third element is now the centroid radius.
In this variant, there will be only a single centroid for each cluster, but the centroid will continuously grow when new points are encountered, i.e. the individual centroid radius will increase.
In this way, this variant is rather similar to the K-means clustering.

\begin{algorithm}
\caption{Online clustering - V2}
\textbf{Input: }{New point $x$, and timestamp $t$} \\
\textbf{Parameters: }{distance threshold $\epsilon$, minimum number of points in a cluster $m$} \\
\textbf{Stored: }{non-clustered points $P = \{(x_s, n_s, t_s)\}_{s=1}^S$, \\ centroids $C = \{(c_k, n_k, \epsilon_k)\}_{k=1}^{K}$}
\begin{algorithmic}[1]
\label{alg3}
\STATE $P_x = \{(x_s, n_s, t_s) \in P \ |\ \textsc{Distance}(x, x_s) <  \epsilon \}$
\FOR{$(x_s, n_s, t_s) \in P_x$}
    \STATE $n_s \leftarrow n_s + 1$
\ENDFOR
\STATE $k^* = \underset{k}{\arg\min} \  \textsc{Distance}(x, c_k) - \epsilon_k$
\IF{$\textsc{Distance}(x, c_{k^*}) - \epsilon_{k^*} <  0$}
    \STATE $n_{k^*} \leftarrow n_{k^*} + 1$
\ELSE
    \STATE $N_x = \{n_k \ |\ (c_k, n_k, \cdot) \in C, \textsc{Distance}(x, c_k)  - \epsilon_{k^*} <  \epsilon \}$
    \STATE $n_x = |P_x|+\sum_{n_k \in N_x} n_x$
    \STATE $p_x = (x,n_x,t)$
    \STATE $P \leftarrow P\cup\{p_x\}$
\ENDIF
\STATE $\textsc{UpdateClusters}(P_x\cup\{p_x\})$
\STATE $\textsc{DeleteOldPoints}(t)$
\end{algorithmic}
\end{algorithm}

Many of the steps of this algorithm are identical to Alg. \ref{alg2}.
The main difference is the individual radius of each centroid, $\epsilon_k$, which makes a difference when computing the closest cluster (line 5).
The only additional change is that the function $\textsc{CheckForNewCentroids}$ has been replaced by $\textsc{UpdateClusters}$ (line 14).
This function plays a similar role and looks at all non-clustered points in $P_x$ and finds the points that can be used to update the centroids, i.e. those with $n_s \geq m-1$.
Once again, depending on whether the neighbors are part of an existing centroid or not, three different scenarios can occur:
\begin{enumerate}
    \item Only neighbors in $P$: Create a new cluster.
    \item Neighbors in a single centroid: Update the existing centroid, i.e. the radius $\epsilon_k$, to contain the point.
    \item Neighbors in multiple centroids: Merge the existing centroids, i.e. update the centroid $c_k$ and radius $\epsilon_k$ to cover all of the previous centroids as well as the new point.
\end{enumerate}

\subsection{Bayesian model}
We adapt the offline prediction model to the online setting.
The difference is that the set of all cluster labels now can change with every new trip that is observed, i.e. $M_u$ has to be replaced with $M_u(i)$.
Here, $M_u(i)$ represents the set of all cluster labels at timestep $i$, i.e. after trip $x_u(i)$ as well as all previous trips have been observed.
This renders the distributions to be time dependent and dynamic.

Consider the distribution of the destination, $p_i(c_u^d)$, where $p_i(c_u^d = k)$ represents the probability of the destination being $k \in M_u(i)\cup\{-1\}$ at timestep $i$.
Assume it follows a categorical distribution, i.e. $p_i(c_u^d) \sim \text{Cat}(K(i)+1, \Lambda(i))$, where both the event probabilities $\Lambda(i)$ and the number of possible cluster labels $K(i)$ depend on $i$ in this setting.
These event probabilities are still interpreted as $\Lambda_k(i) = p_i(c_u^d = k)$.

Similarly, conditioning the destination on the source, $p_i(c_u^d|c_u^s)$, where $p_i(c_u^d = k|c_u^s=j)$ with $j,k\in M_u(i)\cup\{-1\}$ represents the probability of the destination being $k$ given that the trip starts at $j$ at timestep $i$.
Assume that this follows a categorical distribution, i.e. $p_i(c_u^d|c_u^s) \sim \text{Cat}(K(i)+1, \lambda_j(i))$, with the number of possible clusters $K(i)$ and event probabilities $\lambda_j(i)$.
The event probabilities are once again interpreted as $\lambda_{jk}(i) = p_i(c_u^d=k|c_u^s=j)$.

\paragraph{Parameter estimation.}
In this setting, we can still update the posterior distribution of $\Lambda(i)$ and $\lambda_j(i)$ using Bayes' rule.
However, here we employ sequential Bayesian updating, which works by letting the prior distribution in each timestep be the posterior distribution of the previous timestep.
In detail, looking at $\Lambda(i)$, the update rule can be written as
\begin{align*}
    p(\Lambda(i+1)\ |\ C_u^d[i+1], \beta) = \\ \frac{p_i(c_u^d(i+1)\ |\ \Lambda(i), \beta, C_u^d[i])p(\Lambda(i)\ |\ C_u^d[i], \beta)}{p(c_u^d(i+1)\ |\ \beta, C_u^d[i])},
\end{align*}
where $[i]$ is used to denote trip $i$ and all previous trips, whereas $(i)$ is used to denote the specific trip $i$.

There is one problem with this formulation, however, since $\Lambda(i+1)$ and $\Lambda(i)$ do not necessarily have the same number of classes/clusters.
This means that the prior $p(\Lambda(i)\ |\ C_u^d[i], \beta)$ needs to be changed to the following
\begin{align*}
    \hat{p}(\Lambda(i)\ |\ C_u^d[i], \beta) \sim \text{Dir}(K(i+1)+1, \hat{\Lambda}(i)),
\end{align*}
where $\hat{\Lambda}(i)$ should be computed from $\Lambda(i)$.
This is feasible, since the clustering algorithms in Alg. \ref{alg2} and Alg. \ref{alg3} return the updates from the clustering when creating new centroids.
This leads to the following scenarios for $\hat{\Lambda}(i)$:
\begin{itemize}
    \item New cluster label, $\hat{k}$: $\hat{\Lambda}_k(i) = \Lambda_k(i)$ for all $k \in M_u(i)\cup\{-1\}$ and $\hat{\Lambda}_{\hat{k}}(i) = \beta_{\hat{k}}$ is initialized,
    \item Single cluster: $\hat{\Lambda}_k(i) = \Lambda_k(i)$ for all $k$,
    \item Merged cluster labels, $\hat{k} \in \hat{K}$: $\hat{\Lambda}_k(i) = \Lambda_k(i)$ for all $k \in (M_u(i)\cup\{-1\}) \setminus \hat{K}$ and $\hat{\Lambda}_{\tilde{k}}(i) = \sum_{\hat{k}} \Lambda_{\hat{k}}(i)$ is initialized accordingly.
\end{itemize}
Thus, the final update is written as
\begin{align*}
    p(\Lambda(i+1)\ |\ C_u^d[i+1], \beta) = \\ \frac{p_i(c_u^d(i+1)\ |\ \Lambda(i), \beta, C_u^d[i])\hat{p}(\Lambda(i)\ |\ C_u^d[i], \beta))}{p(c_u^d(i+1)\ |\ \beta, C_u^d[i])},
\end{align*}
Since the Dirichlet distribution is conjugate prior to the Categorical distribution, it holds that $p(\Lambda(i+1) | C_u^d[i+1], \beta) \sim \text{Dir}(K(i+1)+1, n(i+1) + \beta)$, where $n(i+1) = (n_{-1}(i+1), n_{0}(i+1), \ldots, n_{K(i+1)-1}(i+1))$ and $n_j(i+1)$ is the number of trips in $C_u^d[i+1]$ ending in $j$ up until timestep $i+1$.

A similar approach can be performed for the conditional distributions with $\lambda_j(i)$.
One will then end up with the update rule
\begin{align*}
    p(\lambda_j(i+1)\ |\ C_u^{\{j\}}[i+1], \alpha_j) = \\ \frac{p_i(C_u^{\{j\}}(i+1)\ |\ \lambda_j(i), \alpha_j, C_u^{\{j\}}[i])\hat{p}(\lambda_j(i)\ |\ C_u^{\{j\}}[i], \alpha_j))}{p(C_u^{\{j\}}(i+1)\ |\ \alpha_j, C_u^{\{j\}}[i])},
\end{align*}
where the notation $C_u^{\{j\}}$ is once again used to denote all the trips starting in $j$.
Again, it holds that $p(\lambda_j(i+1) |  C_u^{\{j\}}[i+1], \alpha_j) \sim \text{Dir}(K(i+1)+1, \hat{n}_j(t+1) + \alpha_j)$ due to conjugacy, where $\hat{n}_j(t+1) =  (n_{j,-1}(i+1), n_{j,0}(i+1), \ldots, n_{j,K(i+1)-1}(i+1))$ and $n_{jk}(i+1)$ is the number of transitions from $j$ to $k$ up until timestep $i+1$.
Finally, setting $\alpha_{jk} = 0$ would result in the maximum likelihood estimate of the parameters.

\subsection{Expert model}
Another option for the prediction model in the online framework is to use an expert model, as presented in \cite{cesa2006prediction}.
An expert model requires a set of experts and a reward function.
In our case, the action set corresponds to the set of possible destinations and the reward is $1$ if an expert makes the correct prediction, and $0$ otherwise.
More precisely, expert models, or learning with expert advice, is an online learning approach where the rewards in each timestep are known for all available actions.
In this section, we adapt this approach to our trip destination problem, where we address several issues such as a dynamic action set.

The athors in \cite{kleinberg2010regret} presents an algorithm called \textit{Follow the Awake Leader} (FTAL), which considers the expert setting with a dynamic set of available actions at every timestep.
It introduces the concept of sleeping experts, which means that the experts are allowed to sleep for some periods of time, i.e. they are not available at those specific time periods.
With some modifications we adapt it to our setup. We consider the following assumptions:
\begin{enumerate}
    \item There is an infinite number of sleeping experts,
    \item Once an expert wakes up, it will stay awake.
    \item Experts can merge.
\end{enumerate}

The modified algorithm is described in Alg. \ref{alg4}, where the action set $A_i$ would correspond to our cluster space $M_u(i)\cup\{-1\}$ and the different experts are the possible $k\in M_u(i)\cup\{-1\}$ options.
For each new trip, the actions played previously are first put in a set $A$ (line 3).
If $A$ is empty, then a random expert is played, and otherwise the expert with the highest average reward is selected (line 4-6).
The rewards are obtained for all available actions, and the stored parameters are updated (line 9-11).
Finally, the function $\textsc{UpdateActionSet}(A_{i-1})$ is used to update the set of available actions.
However, since $A_i$ would correspond to our state space $M_u(i)$, it is obtained as a result of the clustering.
\begin{algorithm}
\caption{Online expert model}
\textbf{Parameters: }{$A_i$ is the set of available action at time $i$, $z_k$ is the cumulative reward for action $k$, $n_k$ is the number of consecutive timesteps action $k$ has been available}
\begin{algorithmic}[1]
\label{alg4}
\STATE Initialize $A_0$, and $n_k = 0, z_k=0$ for all $k \in A_0$
\FOR{$i = 1, \ldots, N_u$}
    \STATE $A = \{k \in A_{i-1} : n_k > 0\}$
    \IF{$A = \emptyset$}
        \STATE Play random available expert $k$, or none
    \ELSE
        \STATE Play expert $k^* = \arg\max_{k\in A}(z_k/n_k)$
    \ENDIF
    \STATE Observe reward $R_k$ for all $k \in A_{i-1}$
    \STATE $z_k \leftarrow z_k + R_k$ for all $k \in A_{i-1}$
    \STATE $n_k \leftarrow n_k + 1$ for all $k \in A_{i-1}$
    \STATE $A_i \leftarrow \textsc{UpdateActionSet}(A_{i-1})$
\ENDFOR
\end{algorithmic}
\end{algorithm}
In this setup, each expert suggests performing a single specific action all the time, i.e. expert $k$ would always predict $k$.

Similar to the Bayesian model, there will be one expert model that considers all trips, as well as one expert model conditioned on each of the starting locations.
In fact, the main difference between these prediction models is the way that a destination is selected.
In the Bayesian approach, the average is taken over the total number of trips, whereas in the expert model it is instead taken over the number of trips for which the destination has been available.
One advantage that this approach have over the Bayesian approach is one can experiment with the definition of the rewards without changing the model itself.
It is also common for expert models to be accompanied with a regret bound, which is provided in \cite{kleinberg2010regret} for FTAL.
On the other hand, with the Bayesian approach it is possible to define priors if one has access to prior information.
There is also an intuitive way to include uncertainty in the predictions.

\subsection{Regret analysis}
A common way to investigate the performance of online learning methods is to look at the regret of the model as a function of the number of trips used for training.
Here, we define the regret in comparison to the offline model being trained on the entire trip history, and then evaluated on the very same data.
Let $p^*$ be the true discrete distribution conditioned on the source locations, and let $p_i$ be the corresponding predicted distribution at timestep $i$.
The squared Hellinger distance \cite{nikulin2001hellinger} between the true and the predicted distribution can then be defined as
\begin{align*}
    H^2(p^*,p_i)={\frac {1}{2}}{\sum _{x \in \mathcal{X}}\left(\sqrt{ p^*(x)}-\sqrt{p_i(x)}\right)^{2}}.
\end{align*}
However, this formulation assumes that both distributions are defined on the same probability space $\mathcal{X}$, which does not necessarily hold in our case.

Let $p_i$ be defined on $\mathcal{X}'_i$ and $p^*$ on $\mathcal{X}$ and assume that there is a surjective function $f:\mathcal{X} \to \mathcal{X}'_i$, i.e. $\forall x' \in \mathcal{X}'_i, \exists x \in \mathcal{X}$ such that $f(x) = x'$.
Further, let $f^c(x) = |\{x' \in \mathcal{X} | f(x') = f(x)\}|$ denote the number of elements $x' \in \mathcal{X}$ such that $f(x') = f(x)$, i.e. the number of elements in $\mathcal{X}$ that maps to $f(x) \in \mathcal{X}'_i$.
Then, one way to define the squared Hellinger distance between $p^*$ and $p_i$ is:
\begin{align*}
    H^2(p^*,p_i) :=\frac {1}{2}\sum_{\substack{x \in \mathcal{X}, \\ f^c(x) > 0}}
    \left(\sqrt{p^*(x)}-\sqrt{\frac{p_i(f(x))}{f^c(x)}}\right)^{2}
    + {\frac {1}{2}}{\sum _{\substack{x \in \mathcal{X}, \\ f^c(x) = 0}} p^*(x)},
\end{align*}
where the probability $p_i(x')$ is split equally amongst all $p^*(x)$ where $f(x) = x'$.
The first sum can be rewritten to yield the following formulation:
\begin{align*}
    H^2(p^*,p_i) :=\frac {1}{2}\sum_{x' \in \mathcal{X}'_i}
    \sum_{\substack{x \in \mathcal{X}, \\ f(x) = x'}}
    \left(\sqrt{p^*(x)}-\sqrt{\frac{p_i(x')}{f^c(x)}}\right)^{2}
    + {\frac {1}{2}}{\sum _{\substack{x \in \mathcal{X}, \\ f^c(x) = 0}} p^*(x)}.
\end{align*}
We consider this formulation from this point forward.

We split the metric into two sub-errors, $H^2_d(p^*,p_i)$ and $H^2_s(p^*,p_i)$, representing the distributional error and state-space error respectively.
First of all, let us define the distributional error:
\begin{align*}
    H^2_d(p^*,p_i) := \frac {1}{2}\sum_{x' \in \mathcal{X}'_i}
    \left(\sqrt{\sum_{\substack{x \in \mathcal{X}, \\ f(x) = x'}} p^*(x)}-\sqrt{\frac{p_i(x')}{f^c(x)}}\right)^{2},
\end{align*}
which essentially implies that $p_i(x')$ should be equal to the sum of $p^*(x)$ for all $x \in \mathcal{X}$ such that $f(x) = x'$.
The state-space error can then be implicitly defined as
\begin{align*}
     H^2_s (p^*,p_i) := H^2(p^*,p_i) - H^2_d(p^*,p_i).
\end{align*}
Note that this is only properly defined if $H^2(p^*,p_i) \geq H^2_d(p^*,p_i)$, which is not trivially true for the parts of the sum where $f^c(x) > 1$.
However, the following theorem shows that this indeed holds.

\begin{theorem}
Given that
\begin{enumerate}
    \item $\boldsymbol{p} = [p_1, \ldots, p_k]$ is the true distribution over a subset of $k$ different states,
    \item $\boldsymbol{q} = [q/k, \ldots, q/k]$ is the predicted distribution over the same $k$ states
\end{enumerate}
then
$H^2(\boldsymbol{p},\boldsymbol{q}) \geq H_d^2(\boldsymbol{p},\boldsymbol{q})$.
\end{theorem}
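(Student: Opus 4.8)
The plan is to unwind the general definitions of $H^2$ and $H^2_d$ in the degenerate configuration of the theorem and then reduce the claim to a single invocation of the Cauchy--Schwarz inequality (equivalently, concavity of the square root). In this configuration all $k$ states collapse under $f$ onto a single point $x'$, so $f^c(\cdot)=k$ throughout this block and the second, $f^c(\cdot)=0$ sum is empty; the predicted mass $q=p_i(x')$ placed on $x'$ is shared equally, i.e.\ each of the $k$ states receives $q/k$. Writing $P=\sum_{j=1}^k p_j$, the distributional error then compares the aggregated true mass $P$ with the predicted mass $q$, while the full distance compares each $p_j$ with its share $q/k$:
\begin{align*}
H^2(\boldsymbol{p},\boldsymbol{q}) = \frac{1}{2}\sum_{j=1}^k\left(\sqrt{p_j}-\sqrt{q/k}\right)^2,
\qquad
H^2_d(\boldsymbol{p},\boldsymbol{q}) = \frac{1}{2}\left(\sqrt{P}-\sqrt{q}\right)^2 .
\end{align*}

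First I would expand both squares. Using $k\cdot(q/k)=q$, the first equals $\tfrac12\big(P-2\sqrt{q/k}\,\sum_{j}\sqrt{p_j}+q\big)$ and the second equals $\tfrac12\big(P-2\sqrt{Pq}+q\big)$, so the $P$- and $q$-terms cancel under subtraction and there remains
\begin{align*}
H^2(\boldsymbol{p},\boldsymbol{q})-H^2_d(\boldsymbol{p},\boldsymbol{q}) = \sqrt{q}\left(\sqrt{P}-\frac{1}{\sqrt{k}}\sum_{j=1}^k\sqrt{p_j}\right).
\end{align*}
It then remains to verify that the bracketed factor is nonnegative, i.e.\ that $\tfrac{1}{\sqrt{k}}\sum_{j}\sqrt{p_j}\le\sqrt{P}$. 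Applying Cauchy--Schwarz to the vectors $(1,\dots,1)$ and $(\sqrt{p_1},\dots,\sqrt{p_k})$ gives $\big(\sum_{j}\sqrt{p_j}\big)^2\le k\sum_{j}p_j=kP$; dividing by $k$ and taking square roots yields the required bound, and the theorem follows, the case $q=0$ being immediate.

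I do not expect a genuine obstacle here: the proof is one line of algebra followed by one classical inequality. The points that warrant a little care are that $\boldsymbol{q}$ need not be a probability vector, so the computation should be carried out for arbitrary nonnegative $p_j$ and $q$ rather than for normalized distributions, and that the cancellation of the $P$- and $q$-terms rests precisely on the equal-splitting normalization $k\cdot(q/k)=q$ that underlies $H^2_d$. For the surrounding exposition it is also worth recording that equality holds exactly when $q=0$ or all the $p_j$ are equal, which pins down when the state-space error $H^2_s=H^2-H^2_d$ vanishes.
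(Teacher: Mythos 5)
Your proof is correct and follows essentially the same route as the paper: expand both squared Hellinger expressions, cancel the common $P$- and $q$-terms, and reduce the claim to the inequality $\left(\sum_{j=1}^{k}\sqrt{p_j}\right)^2 \le k\sum_{j=1}^{k} p_j$. The only divergence is in how that last inequality is established --- you dispatch it with a single application of Cauchy--Schwarz to $(1,\ldots,1)$ and $(\sqrt{p_1},\ldots,\sqrt{p_k})$, whereas the paper proves it by a more laborious term-by-term expansion that bottoms out in $\sum_{j\ne i}(\sqrt{p_j}-\sqrt{p_i})^2\ge 0$; your version is shorter and cleaner, and your added remark on the equality case ($q=0$ or all $p_j$ equal) is a useful observation the paper omits.
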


\begin{proof}
The overall squared Hellinger distance for these states are:
\begin{align}
\label{proof_eq1}
\begin{split}
    H^2(\boldsymbol{p},\boldsymbol{q})&={\frac {1}{2}}{\sum _{j=1}^{k}(\sqrt{ p_j}-\sqrt{q/k})^{2}} \\
    &= \frac {1}{2}\sum_{j=1}^{k} \left(p_j + \frac{q}{k}\right) - \sum_{j=1}^{k} \sqrt{\frac{p_j q}{k}}.
    \end{split}
\end{align}
The distribution error can be written as:
\begin{align}
\label{proof_eq2}
    \begin{split}
    H_d^2(\boldsymbol{p},\boldsymbol{q}) &= \frac {1}{2}\left(\sqrt{\sum_{j=1}^{k} p_j} - \sqrt{q}\right)^2 \\
    &= \frac {1}{2}\sum_{j=1}^{k} \left(p_j + \frac{q}{k}\right) - \sqrt{q\sum_{j=1}^{k} p_j}.
    \end{split}
\end{align}

Now, we show from Eq. \ref{proof_eq1} and \ref{proof_eq2} that $H^2(\boldsymbol{p},\boldsymbol{q}) \geq H_d^2(\boldsymbol{p},\boldsymbol{q})$, i.e. after some simplifications we show that:
\begin{align*}
    - \sum_{j=1}^{k} \sqrt{\frac{p_j q}{k}} \geq - \sqrt{q\sum_{j=1}^{k} p_j} \iff \\
    \sum_{j=1}^{k} \sqrt{\frac{p_j}{k}} \leq \sqrt{\sum_{j=1}^{k} p_j} \iff \\
    \left(\sum_{j=1}^{k} \sqrt{p_j} \right)^2 \leq k\sum_{j=1}^{k} p_j
\end{align*}
The left hand side of the last inequality can be rewritten as
\begin{align*}
    \left(\sum_{j=1}^{k} \sqrt{p_j} \right)^2 = \left(\sum_{j=1, j \neq i}^{k} \sqrt{p_j} \right)^2 + p_i + \sum_{j=1, j \neq i}^{k} 2\sqrt{p_ip_j},
\end{align*}
and the right hand side can be written as
\begin{align*}
   k\sum_{j=1}^{k} p_j = (k-1)\sum_{j=1, j \neq i}^{k} p_j + \sum_{j=1, j \neq i}^{k} p_j + kp_i
\end{align*}

Looking only at the first terms in these expression, one notices that it is a scaled down form of the original problem.
Thus, if we can show that
\begin{align*}
    p_i + \sum_{j=1, j \neq i}^{k} 2\sqrt{p_ip_j} \leq \sum_{j=1, j \neq i}^{k} p_j + kp_i
\end{align*}
we have proven the claim.
This inequality can be simplified accordingly:
\begin{align*}
    p_i + \sum_{j=1, j \neq i}^{k}2 \sqrt{p_ip_j} \leq \sum_{j=1, j \neq i}^{k} p_j + kp_i \iff \\
    \sum_{j=1, j \neq i}^{k} (p_i + p_j - 2\sqrt{p_ip_j}) \geq 0 \iff \\
    \sum_{j=1, j \neq i}^{k} (\sqrt{p_j} - \sqrt{p_i})^2 \geq 0
\end{align*}
where the last inequality is trivially true.
Thus, we concluded that $H^2(\boldsymbol{p},\boldsymbol{q}) \geq H_d^2(\boldsymbol{p},\boldsymbol{q})$. \qed
\end{proof}

Thus, the Hellinger regret can finally be defined as
\begin{align*}
    \textsc{regret}(i) &:= \sum_{i' \leq i} H^2(p^*,p_{i'})  \\
    &= \sum_{i' \leq i} H^2_d(p^*,p_{i'}) + H^2_s(p^*,p_{i'}),
\end{align*}
i.e. the cumulative squared Hellinger distance.

\section{Experiments}
In this section, we investigate and evaluate the online trip prediction framework, i.e. both the clustering technique and the prediction model, on a real-world dataset of private vehicle trip histories.
The clustering is mainly evaluated using the well known cluster metrics, in order to understand how the clusters of the online clustering evolve as more trips are observed.
The online prediction models, and the full online framework, are instead evaluated in comparison to the offline pipeline using the accuracy on a held out test set.
Furthermore, they are evaluated using a novel regret metric based on the Hellinger distance, which evaluates the similarity between the predicted and the true distribution of destinations.

\subsection{Data}
In this paper, we use the real-world data collected in \cite{karlsson2013swedish}.
It consists of over 700 GPS-tracked vehicles, or devices, registered either in the county of Västra Götaland or in Kungsbacka municipality.
These are located in the south-western part of Sweden and include Gothenburg, which is the second largest city of the country.

A dataset consisting of trips for each of the vehicles is extracted from the original GPS-logs.
This was done by defining the end of a trip as the loss of GPS fixation, i.e., when the vehicle has been turned off.
In addition, a vehicle speed of less than 0.1 km/h for 10 minutes was also used to signify the end of a trip.
Finally, two consecutive trips have been merged if the time between them is less than 10 seconds.

\begin{figure}[ht]
    \centering
    \includegraphics[width=0.5\columnwidth]{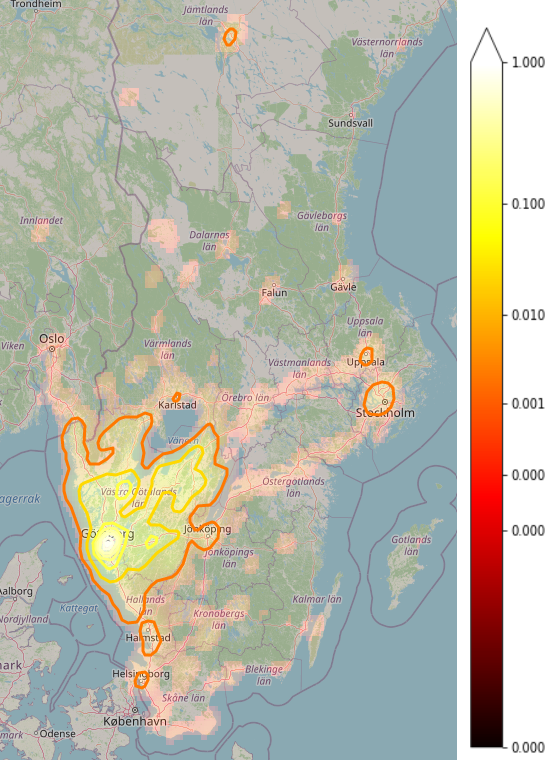}
    \caption{Gaussian kernel density (followed by a power-law normalization of its linear mapping to the 0-1 range) of trip destinations for all vehicles in \cite{karlsson2013swedish} after preprocessing.}
    \label{fig:density_plot}
\end{figure}

%\paragraph{Preprocessing.}
In order to be somewhat consistent with the data processing performed in similar works \cite{ashbrook2003using,alvarez2010trip,zong2019trip}, we perform additional filtering to the data provided in \cite{karlsson2013swedish}:
\begin{enumerate}
  \item[i] Trips being shorter than 100 meters, or less than 4 minutes, are discarded.
  \item[ii] Vehicles with a trip history shorter than 30 days, or with a frequency of less than 1 trip per day, are discarded (guarantees at least 30 distinct trips per user).
\end{enumerate}

\begin{figure}[ht]
    \centering
    \includegraphics[width=1\columnwidth]{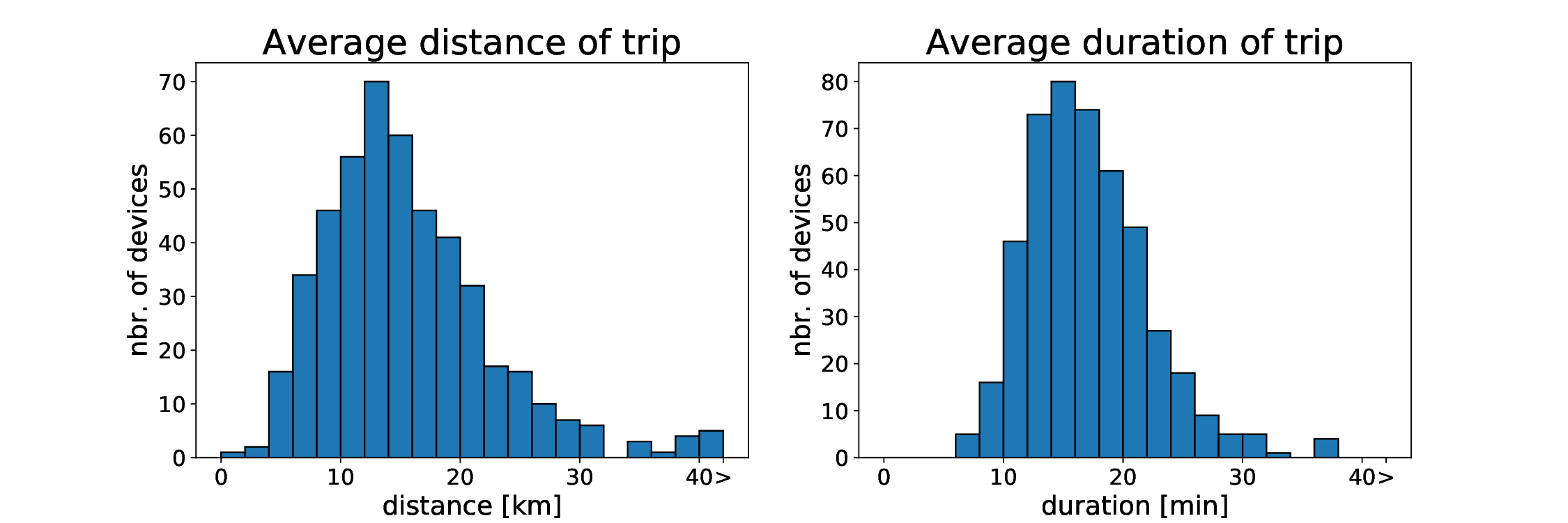}
    \caption{Average distance (left) and average duration (right) of the trips considered in the dataset.}
    \label{fig:data_info}
\end{figure}

After the preprocessing, about 55\% out of the original trips and 66\% out of the vehicles remain (74453 out of 134756, and 473 out of 716, respectively).
The remaining trip destinations can be seen in Fig. \ref{fig:density_plot}, which shows the power-law normalization of the values of a Gaussian kernel density estimation after they have been linearly mapped to the range $[0,1]$.

We observe that the majority of the trips end in the south-western part of Sweden, i.e., where the vehicles are also registered.
Furthermore, in Fig. \ref{fig:data_info} the average distance and duration of the remaining trips are shown for each of the users.
Across all users, the average length of a trip is $15.57$ km and the average duration is $17.05$ min.
In this data, 413 unique private vehicles are studied.
Each of these vehicles corresponds to one separate case study, which means that we effectively study 413 different cases.
For this type of data this is substantial, since most of the existing public datasets are not private vehicle driving histories, but correspond to taxis, public transport, etc.

\subsection{Clustering}
By clustering the entire dataset with DBSCAN using the parameters $\epsilon = 100$ m and $m = 2$, one can interpret each cluster as a location that the user has visited at least two times.
Fig. \ref{fig:offline_cluster_stats} illustrates the number of clusters and the percentage of trips ending in a cluster for all the users in the dataset.
On average, the number of found clusters, i.e. $K$, is $15.5$ and $75.4\%$ of users trips end in this set of clusters.

\begin{figure}[ht]
    \centering
    \includegraphics[width=1\columnwidth]{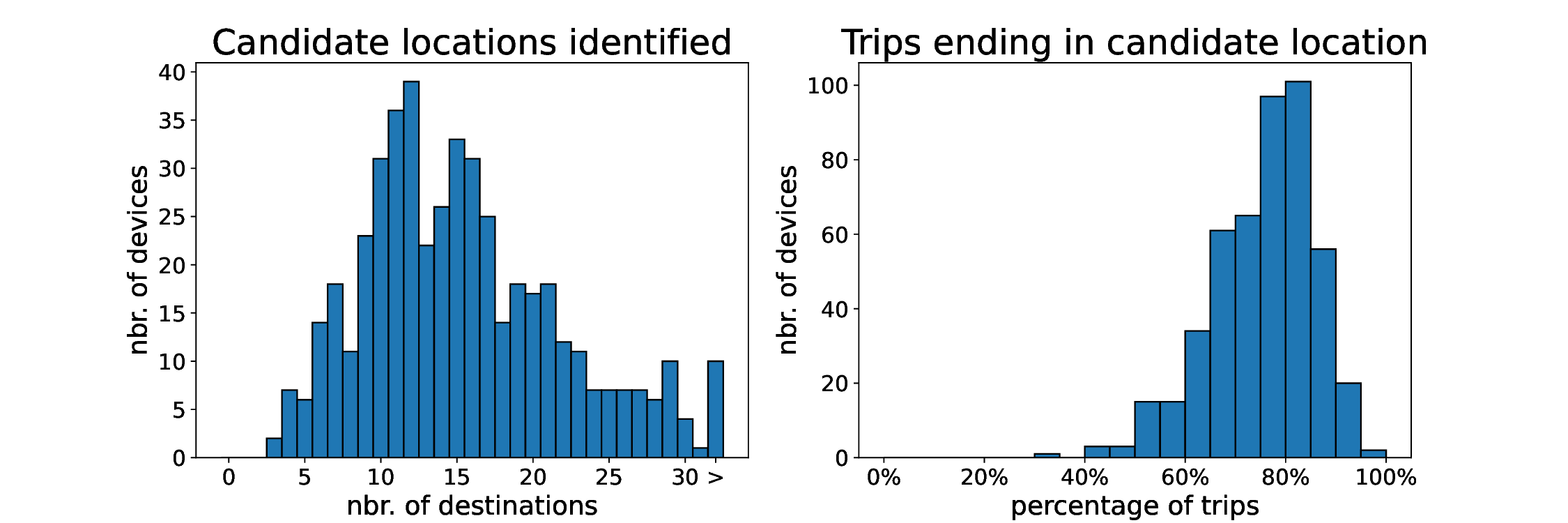}
    \caption{Offline clustering: Number of clusters found (left) and the percentage of trips ending in a cluster (right) for the different users in the dataset.}
    \label{fig:offline_cluster_stats}
\end{figure}

The same experiment is performed using the two variants of the online clustering algorithm.
Fig. \ref{fig:online_cluster_stats_v1} and Fig. \ref{fig:online_cluster_stats_v2} show the results for variant 1 and variant 2 respectively.
The parameters $m$ and $\epsilon$ are the same as for the offline clustering.
The additional parameter $r$ in variant 1 is set to $1/2$, and the function $\textsc{DeleteOldPoints}(t)$ is adjusted to remove the points older than 28 days in both variants.
Other combinations of the additional parameters were tested, but the results are robust to the changes, and they are not affected much unless the parameters are modified drastically.
The average number of clusters found and trips ending in the set of clusters is $13.6$ and $64.1\%$ for variant 1 and $13.2$ and $64.7 \%$ for variant 2.

\begin{figure}[ht]
    \centering
    \includegraphics[width=1\columnwidth]{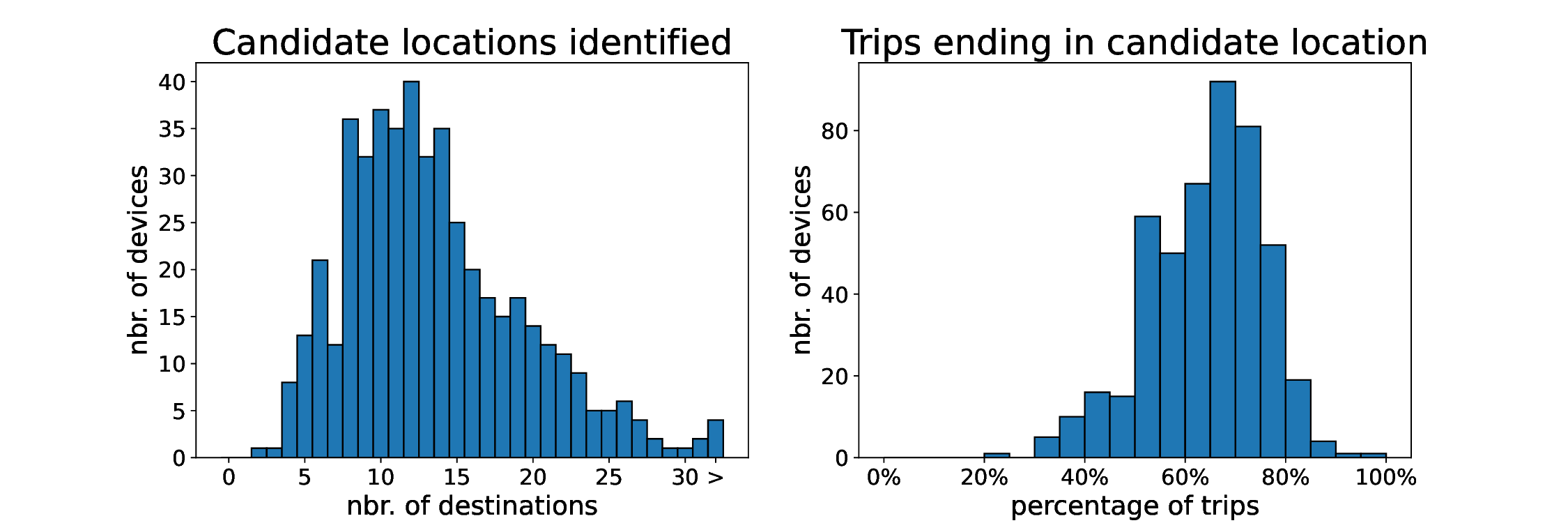}
    \caption{Online clustering - V1: Number of clusters found (left) and the percentage of trips ending in a cluster (right) for the different users in the dataset.}
    \label{fig:online_cluster_stats_v1}
\end{figure}

\begin{figure}[ht]
    \centering
    \includegraphics[width=1\columnwidth]{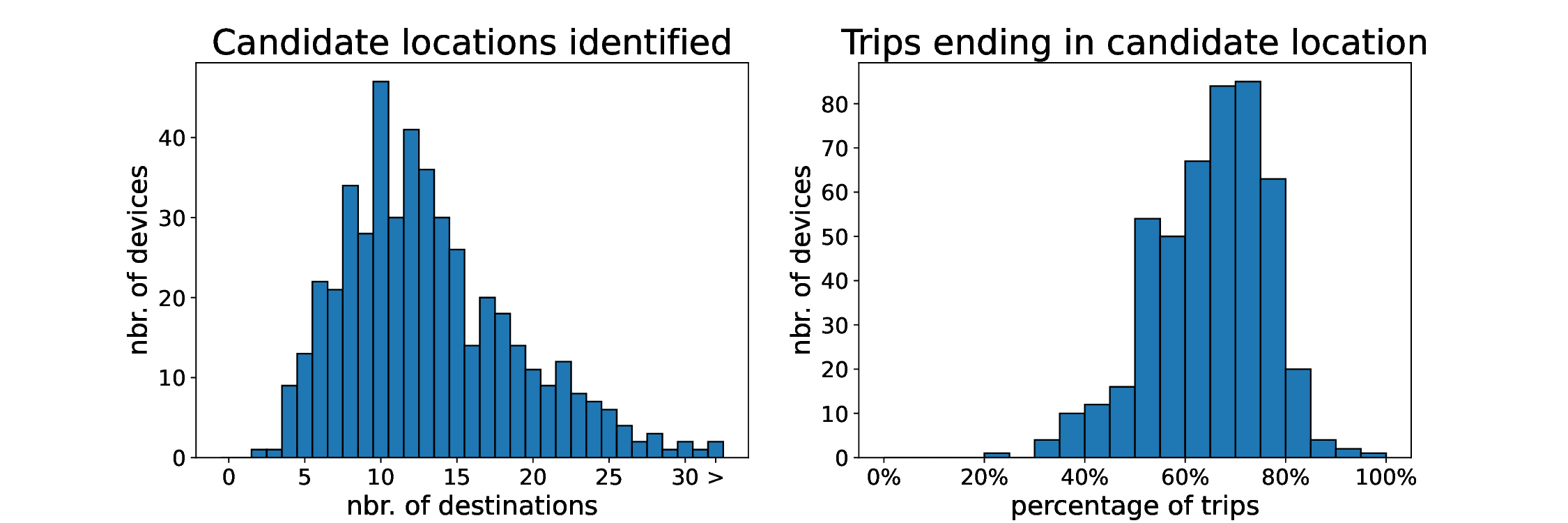}
    \caption{Online clustering - V2: Number of clusters found (left) and the percentage of trips ending in a cluster (right) for the different users in the dataset.}
    \label{fig:online_cluster_stats_v2}
\end{figure}

Looking at the difference between the offline and online clustering methods, we notice that the number of clusters as well as the percentage of trips ending in a cluster appear to have dropped, although this can be anticipated.
The number of clusters are affected by the fact that non-clustered points are only kept for a given number of days.
Furthermore, by storing the clusters as centroids, the possibility of merging two nearby clusters increases.
The percentage of trips ending in a cluster is affected by the number of clusters, but perhaps even more by the fact that the assignment of labels is done in an online way.
In other words, the first time a place is visited it cannot yet be labeled as a candidate location and will at that point in time be considered an outlier.
This means that the first point to appear where a cluster is going to be formed will never be counted.

Another way that one can evaluate the online clustering algorithms is to look at the evolution of the rand score, mutual information, and v-measure score \cite{hubert1985comparing,vinh2010information,rosenberg2007v}.
All these metrics compare the predicted cluster labels of each trip with the true labels and yield a score that is upper limited by $1$, where a score of $1$ indicates a perfect match.
The true labels in this comparison are those obtained from the offline clustering algorithm, when run on the full dataset, i.e. on all available trips for each user.

In Figs. \ref{fig:cluster_metrics_v1} and \ref{fig:cluster_metrics_v2}, one can see the performance using these metrics for the two variants of the online clustering algorithms.
In general, we observe that the average of all metrics appears to increase as more trips are processed, as expected.
Once again, it is important to emphasize that the cluster labels of the online clustering algorithms are produced in an online manner.

\begin{figure}[ht]
    \centering
    \includegraphics[width=1.1\linewidth]{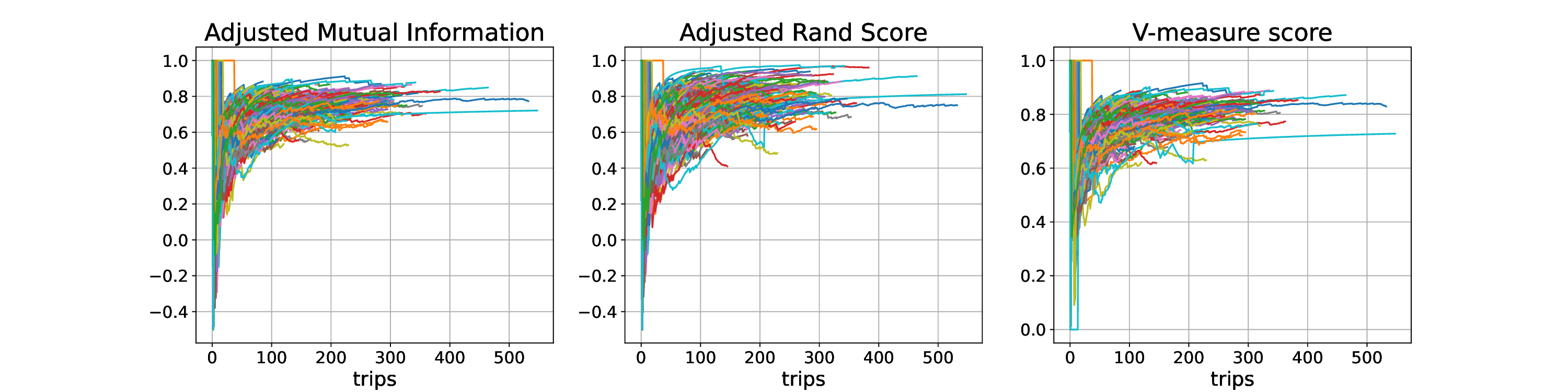}
    \caption{Evolution of the mutual information, the rand score and the v-measure score using the first variant of the online clustering algorithm.}
    \label{fig:cluster_metrics_v1}
\end{figure}

\begin{figure}[ht]
    \centering
    \includegraphics[width=1.1\columnwidth]{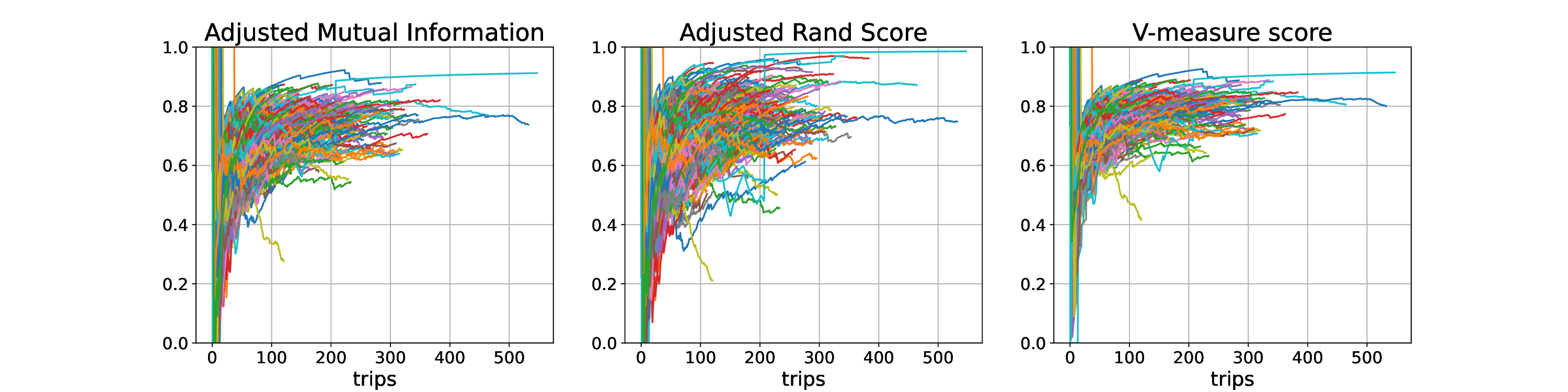}
    \caption{Evolution of the mutual information, the rand score and the v-measure score using the second variant of the online clustering algorithm.}
    \label{fig:cluster_metrics_v2}
\end{figure}

Another interesting aspect is the similarity between the true clusters and those found by the online clustering algorithms after considering the full trip history.
This can be done by computing the labels of the online clustering algorithm after it has been trained on the full trip history.
The histogram in Fig. \ref{fig:cluster_metrics_hist} shows the difference between the two clustering variants.
Using the same metrics, one can see that the clusters obtained from the first variant are more similar to the true ones, since the first variant consistently yields higherscores than the second variant.

\begin{figure}[ht]
    \centering
    \includegraphics[width=1\columnwidth]{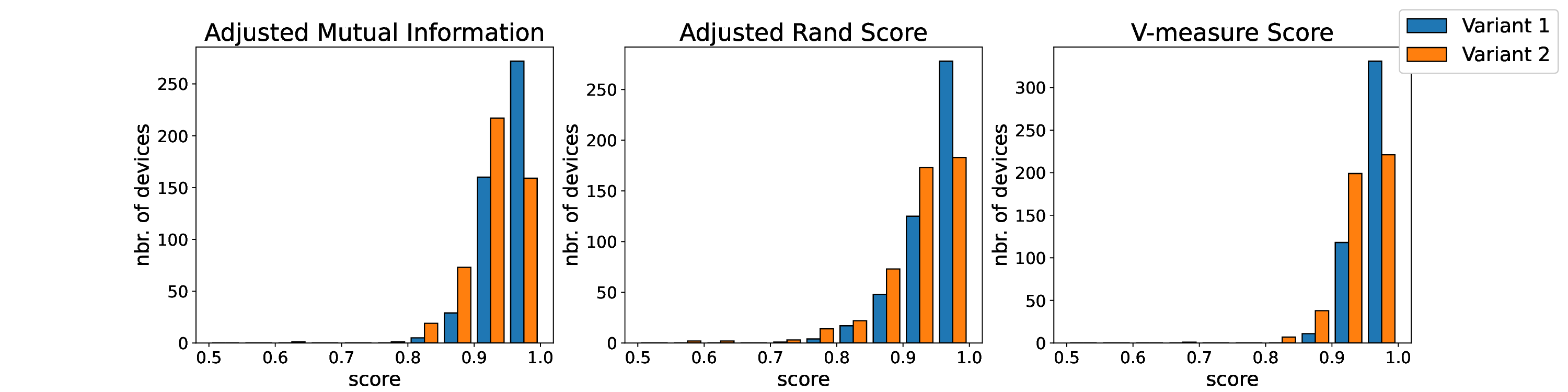}
    \caption{Comparison of the clusters obtained from the two clustering variants after considering the full trip history.}
    \label{fig:cluster_metrics_hist}
\end{figure}

On average, the first variant yields the scores $[0.953, 0.947, 0.963]$ for the mutual information, rand score and v-measure score, respectively.
The second variant gives the average scores $[0.931, 0.925 , 0.945]$.
Thus, it seems that on average the clusters that are found by the first variant are more similar to the true clusters than the second variant, albeit only slightly.
Interestingly, if one instead looks at the minimum values, i.e. the worst performance amongst the different users, the first variant gives $[0.835, 0.739, 0.869]$, whereas the second variant yields $[0.631, 0.452, 0.687]$.
Thus, in the worst case scenario, the clusters obtained from the first variant appears to be more stable as well.
This could partly be attributed to the fact that the first Variant has the possibility to shape the clusters as a union of centroids, i.e., they are not necessarily circular.

\subsection{Evaluation of the entire framework}
To investigate the full framework, the first $80\%$ of each users' trip history is used to create a training set, leaving the rest for testing.
A clustering algorithm is run to produce $C_u^s$ and $C_u^d$ for the training set, and the transitions are used to estimate the parameters of the distributions.

\subsubsection{Offline setting}
If the starting location is an outlier, i.e. $c_u^s = -1$, the distribution $p(c_u^d)$ is used to predict the destination by $k^* = \arg\max_{k\in M_u} p(c_u^d = k)$.
If $c_u^s$ is not an outlier, i.e. $c_u^s = j$ where $j\in M_u$, the prediction is done as $k^* = \arg\max_{k\in M_u} p(c_u^d = k\ |\ c_u^s = j)$.
In other words, the prediction always corresponds to the cluster with the highest probability, excluding the outliers.

By evaluating the accuracy of the predictions, we find that out of all trips, the proposed model is able to predict the next destination in $36.15\%$ of the cases on average.
Looking only at the trips that end in one of the clusters, i.e. those that can actually be predicted, the accuracy increases to $56.22\%$ on average.
The distributions of the accuracy over the different users in both cases are displayed in Fig. \ref{fig:offline_acc}.

\begin{figure}[ht]
    \centering
    \includegraphics[width=1\columnwidth]{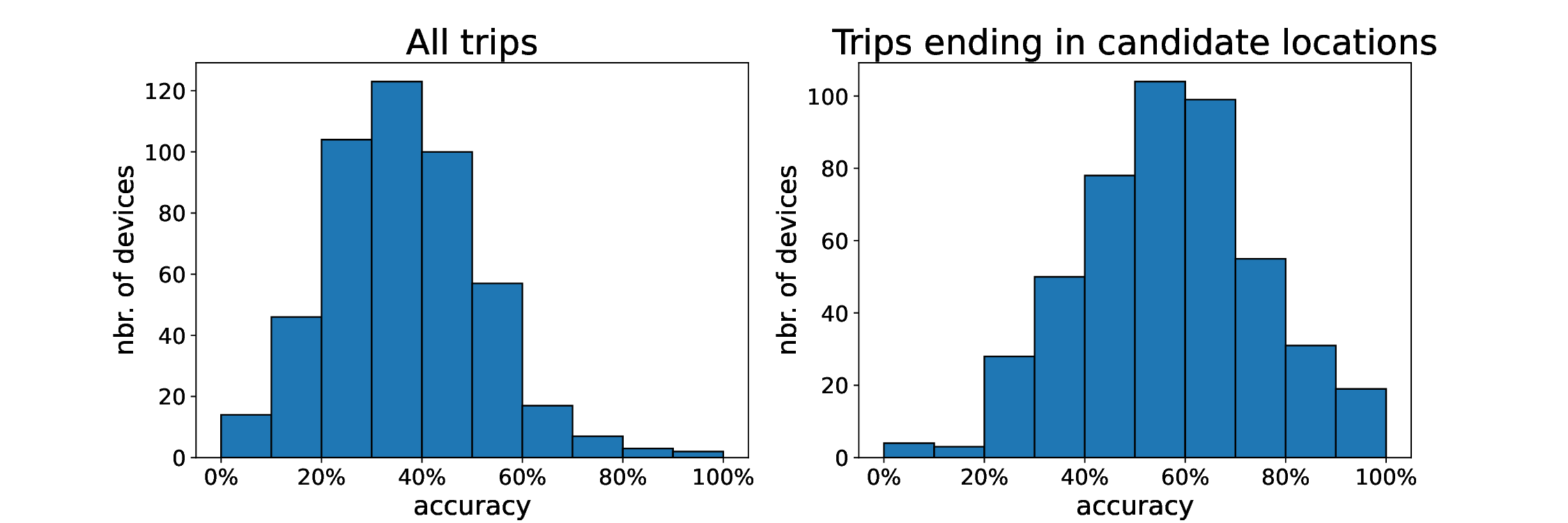}
    \caption{Offline setting: Prediction accuracy of the different users if considering all trips (left) and only trips ending in candidate locations (right).}
    \label{fig:offline_acc}
\end{figure}

\subsubsection{Online setting}
We evaluate the online setting, including both the clustering and the prediction model, on the same data as in the offline case, i.e. when testing on the last $20\%$ of each users trip history.
This yields similar results to the offline setting.
For the Bayesian model the prediction is always made as the cluster with the highest probability, excluding the outliers.
The source, $c_u^s(i)$, is predicted using Alg. \ref{alg:source_cluster} with $\delta = 2$.
If the source $c_u^s(i) = -1$, i.e. it is an outlier, the distribution of $p_i(c_u^d)$ is used to predict $k^* = \arg\max_{k\in M_u(i)} p_i(c_u^d = k)$.
Instead, if $c_u^s(i)$ is not an outlier, i.e. $c_u^s(i)=j$ with $j \in M_u(i)$, the prediction is made according to  $k^* = \arg\max_{k\in M_u(i)} p_i(c_u^d = k\ |\ c_u^s = j)$.
Similarly, the expert model also excludes outliers in the prediction, by not considering the outliers in the set of available actions when selecting an expert.
The expert model corresponding to $p_i(c_u^d)$ is used when $c_u^s(i) = -1$, and the models conditioned on the starting locations are used otherwise.

\begin{figure}[hb]
    \centering
    \includegraphics[width=0.9\columnwidth]{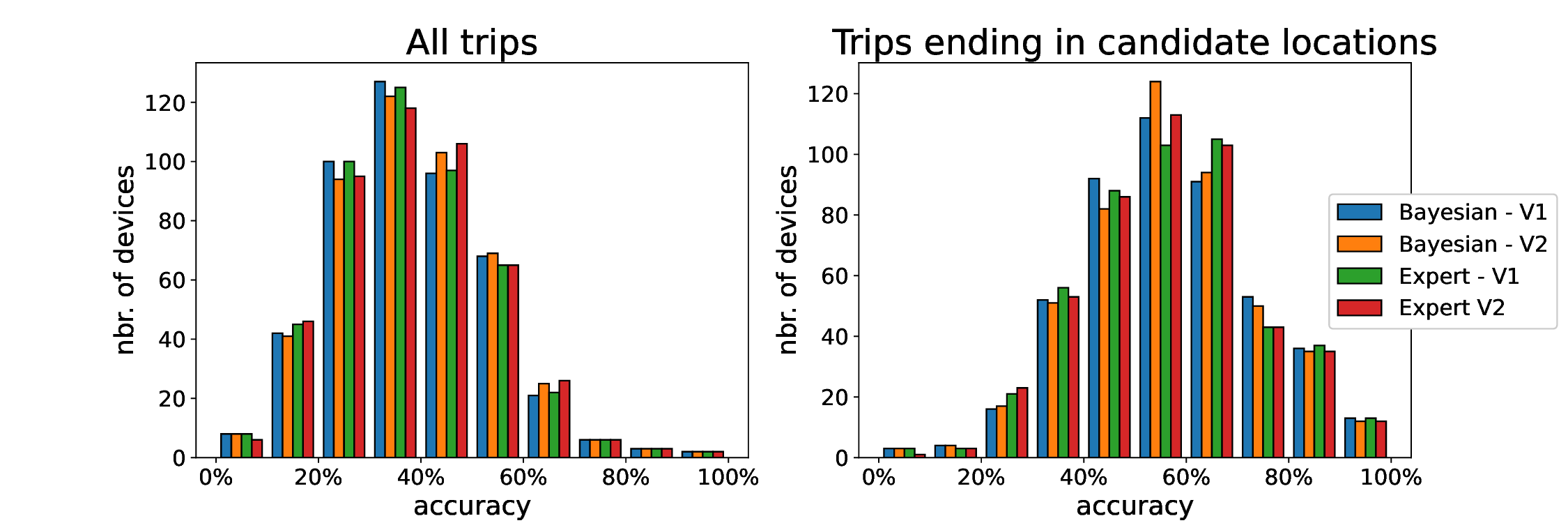}
    \caption{Online setting: Prediction accuracy of the different users if considering all trips (left) and only trips ending in candidate locations (right) for the proposed online prediction models.}
    \label{fig:online_acc}
\end{figure}

In Fig. \ref{fig:online_acc}, the distribution of the accuracy is shown for both prediction models, as well as the two clustering variants.
In general, upon visual inspection the different configurations appear to perform equally well.
Furthermore, the accuracy for all trips, as well as for the subset of trips ending in a candidate location, look similar to the histograms presented for the offline setting.
The Bayesian model using the first clustering variant has an average accuracy of $37.51\%$ and $56.05\%$ for all trips and trips ending in candidate locations, respectively.
Instead, using the Bayesian model with the second clustering variant one obtains $38.26\%$ and $56.12\%$ for the two cases.
Finally, using the expert prediction model yields $37.22\%$ and $55.61\%$ with the first clustering variant, and $38.05\%$ and $55.84\%$ with the second clustering variant.
Regardless of prediction configuration, these results are comparable to the offline version, with only minor deviations.

\subsubsection{Regret}
The regret for several clusters with a large number of trips is shown in Fig. \ref{fig:regret} for both the Bayesian model and the Expert algorithm using the two clustering variants.
The regret is split into the state-space error and the distribution error.
For comparison, we consider three baselines:

\begin{figure}[htb]
    \centering
    \begin{subfigure}[t]{1\columnwidth}
        \centering
        \includegraphics[width=1\textwidth]{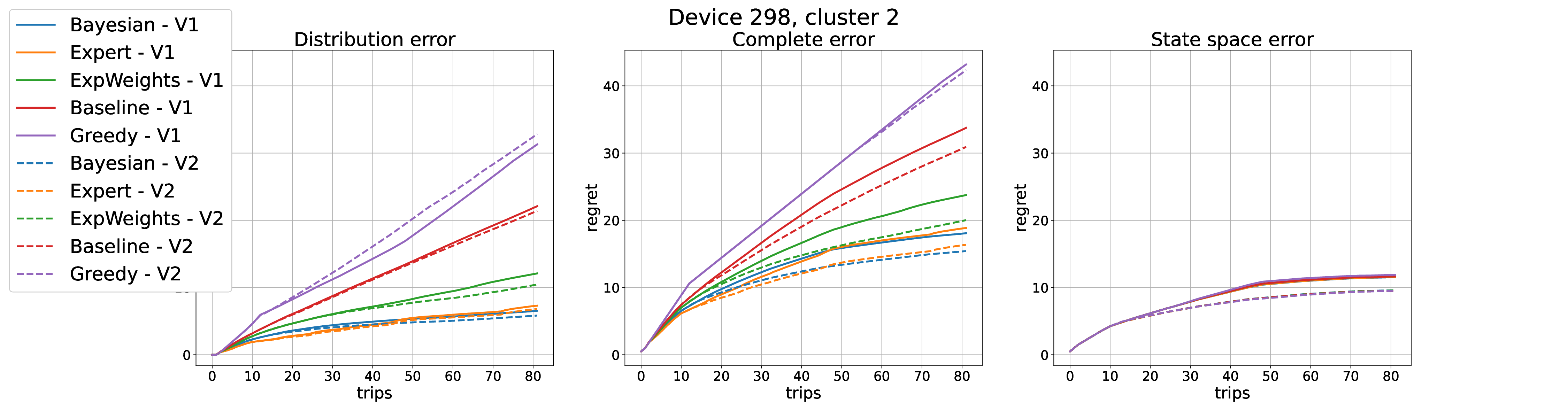}
    \end{subfigure}%

    \begin{subfigure}[t]{1\columnwidth}
        \centering
        \includegraphics[width=1\textwidth]{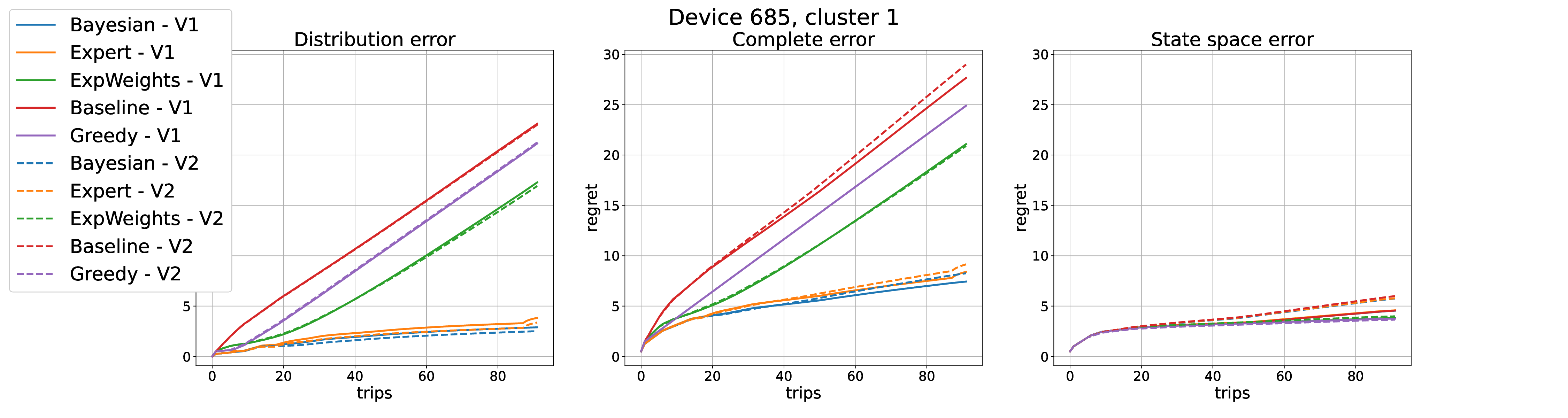}
    \end{subfigure}

    \begin{subfigure}[t]{1\columnwidth}
        \centering
        \includegraphics[width=1\textwidth]{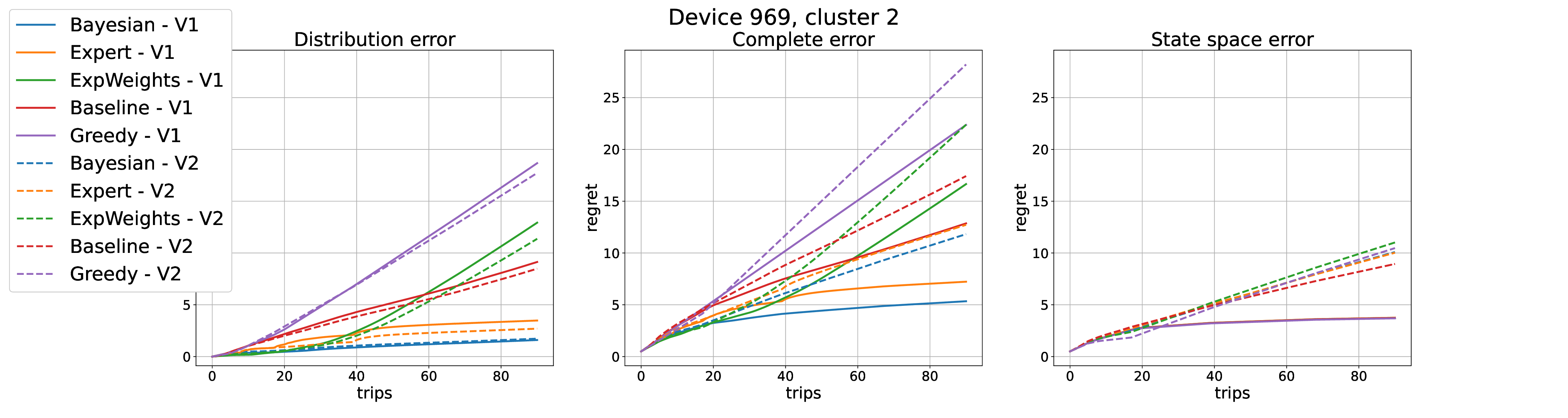}
    \end{subfigure}
    \caption{Hellinger regret for the Bayesian method and the expert algorithm, compared to three baselines: Non-conditioned distribution, Exponential Weights algorithm, and a Greedy algorithm.}
    \label{fig:regret}
\end{figure}

\begin{figure}[hbt]
    \centering
    \includegraphics[width=1\columnwidth]{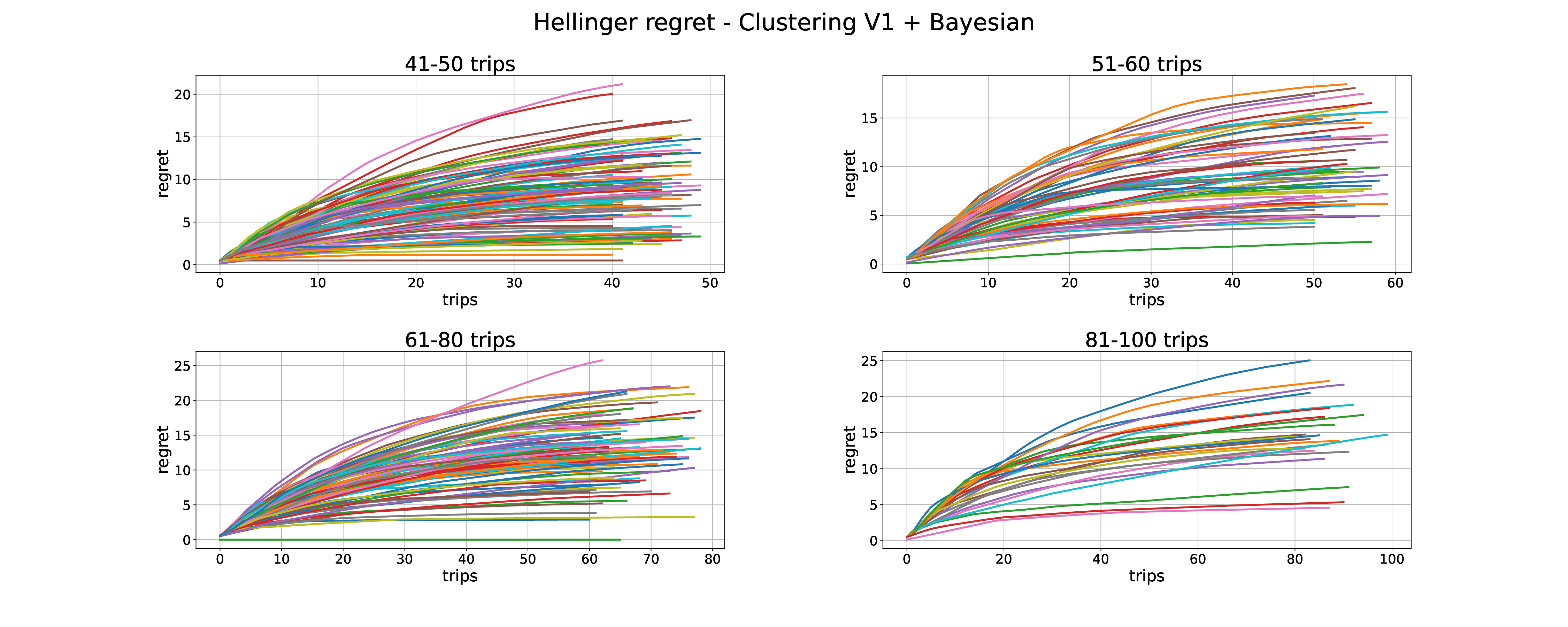}
    \caption{Hellinger regret for the Bayesian model with the first clustering variant.}
    \label{fig:regret_state_v1_bayes}
\end{figure}

\begin{itemize}
    \item[-] Baseline, where we only use the distribution of $p(c_u^d)$, i.e. never conditioning on the source of a trip.
    \item[-] ExpWeights, which is the exponential weights algorithm \cite{cesa2006prediction}.
    \item[-] Greedy, that is a greedy variant of the Bayesian model, i.e. we select the most likely options with $100 \%$ confidence.
\end{itemize}

It is worth to emphasize that no established baseline exists for this specific problem setting.
The baselines that do exist for trip destination prediction do not consider online learning, or even the creation of the state space (clusters), in their model.
Looking at these three examples, we observe that the Bayesian model and the expert algorithm outperform the alternative methods.

We also observe the impact of the online clustering variant on the performance.
In the first example (device 298, cluster 2) the second clustering variant helps to decrease the distribution error, without increasing the state-space error.
The reason is that this variant assigns most of the early trips to the different clusters, since the clusters generated by this variant cover a larger area.
However, this could also be a disadvantage, which is the case for the last example (device 948, cluster 2).
In this example, two clusters are merged, which punishes the state-space error heavily.
In general, most cases behave similar to the second example (device 685, cluster 1), where the two clustering variants yield a similar performance.

Lastly, in Fig. \ref{fig:regret} we also observe a sub linear behaviour for the complete error in all the three examples, which indicates that learning improves over time.
This is true for essentially all examples of devices/clusters that have a sufficient trip history.
As an example, in Fig. \ref{fig:regret_state_v1_bayes} one can see the Hellinger regret for the Bayesian model with the first clustering variant for all devices/clusters with 41-100 trips.
The sub linearity can clearly be observed for all examples.
Such a behavior is usually expected from a proper online learning paradigm.

\section{Conclusion}
In this work, we developed a unified online framework for trip destination prediction consisting of (i) clustering and (ii) prediction model.
The online prediction models are generic and can easily be adapted to other offline or online prediction models in the Bayesian settings that has been studied, e.g., conditioned on additional attributes.

Firstly, we proposed two novel online clustering algorithms and two different online prediction models.
The clustering algorithms are online adaptation of the offline DBSCAN method, where the clusters are stored as centroids.
The first prediction models is an online adaptation of a Bayesian model conditioned on the starting position, whereas the second option is an adaption of an expert algorithm.

Secondly, we evaluated the online clustering algorithms and the full online framework on a real world trip dataset.
The clustering methods were shown to find the most important clusters of the offline solution.
We also demonstrated that the full framework yields consistent results with the offline model on unseen data.

Finally, we introduced a new evaluation metric suitable for the online framework.
This metric is able to distinguish between distributional error and state-space error, i.e., to distinguish between the clustering and prediction errors.
With sufficient trip histories we were able to show that the proposed methods converge to a probability distribution resembling the true underlying distribution with a lower regret compared to the baselines.

Future work will  consist of adding side information to the prediction models, such as time of day, month of year, weather, calendar information, etc.
Perhaps the simplest way to accomplish this is to condition the models on the additional attributes.
Another future extension is to investigate mixture of offline and online models and study how the performance improves if we already have an initial set of trips and a model trained based on them.
Finally, since the clusters represent the geographical locations, then it would be interesting to investigate a hierarchical variant of DBSCAN or other hierarchical clustering methods \cite{Chehreghani21,ChehreghaniAC08} in order ro take the cluster proximities into account.

\section*{Acknowledgement}
This work is funded by the Strategic Vehicle Research and
Innovation Programme (FFI) of Sweden. Project number: 49122-1.
We would like to thank Shafiq Urréhman, Niklas Legnedahl, Sri Vishnu, and Petter Frejinger from CEVT (China Euro Vehicle Technology AB) for insightful discussions and support.

% BibTeX users please use one of
\bibliographystyle{plain}      % basic style, author-year citations
\bibliography{references}   % name your BibTeX data base

\begin{thebibliography}{10}

\bibitem{niklas2}
Niklas {\AA}kerblom, Yuxin Chen, and Morteza Haghir~Chehreghani.
\newblock Online learning of energy consumption for navigation of electric
  vehicles.
\newblock {\em CoRR}, abs/2111.02314, 2021.

\bibitem{onlineMinimax}
Niklas {\AA}kerblom, Fazeleh~Sadat Hoseini, and Morteza~Haghir Chehreghani.
\newblock Online learning of network bottlenecks via minimax paths.
\newblock {\em Machine Learning}, 2022.

\bibitem{alvarez2010trip}
Juan~Antonio Alvarez-Garcia, Juan~Antonio Ortega, Luis Gonzalez-Abril, and
  Francisco Velasco.
\newblock Trip destination prediction based on past gps log using a hidden
  markov model.
\newblock {\em Expert Systems with Applications}, 37(12):8166--8171, 2010.

\bibitem{ashbrook2003using}
Daniel Ashbrook and Thad Starner.
\newblock Using gps to learn significant locations and predict movement across
  multiple users.
\newblock {\em Personal and Ubiquitous computing}, 7(5):275--286, 2003.

\bibitem{taxi}
Alexandre Brébisson, Étienne Simon, and Alex Auvolat.
\newblock Artificial neural networks applied to taxi destination prediction.
\newblock 09 2015.

\bibitem{cesa2006prediction}
Nicolo Cesa-Bianchi and G{\'a}bor Lugosi.
\newblock {\em Prediction, learning, and games}.
\newblock Cambridge university press, 2006.

\bibitem{Chehreghani21}
Morteza~Haghir Chehreghani.
\newblock Reliable agglomerative clustering.
\newblock In {\em International Joint Conference on Neural Networks (IJCNN)},
  pages 1--8. {IEEE}, 2021.

\bibitem{ChehreghaniAC08}
Morteza~Haghir Chehreghani, Hassan Abolhassani, and Mostafa~Haghir Chehreghani.
\newblock Improving density-based methods for hierarchical clustering of web
  pages.
\newblock {\em Data Knowl. Eng.}, 67(1):30--50, 2008.

\bibitem{CHEN2010830}
Cynthia Chen, Hongmian Gong, Catherine Lawson, and Evan Bialostozky.
\newblock Evaluating the feasibility of a passive travel survey collection in a
  complex urban environment: Lessons learned from the new york city case study.
\newblock {\em Transportation Research Part A: Policy and Practice},
  44(10):830--840, 2010.

\bibitem{chen}
Yuxin Chen and Morteza Haghir~Chehreghani.
\newblock Trip prediction by leveraging trip histories from neighboring users.
\newblock In {\em 25th {IEEE} International Conference on Intelligent
  Transportation Systems, {ITSC}}, pages 967--973. {IEEE}, 2022.

\bibitem{asehuman12}
Erfan Davami and Gita Sukthankar.
\newblock Online learning of user-specific destination prediction models.
\newblock pages 40--43, 12 2012.

\bibitem{epperlein2018bayesian}
Jonathan~P Epperlein, Julien Monteil, Mingming Liu, Yingqi Gu, Sergiy Zhuk, and
  Robert Shorten.
\newblock Bayesian classifier for route prediction with markov chains.
\newblock In {\em 2018 21st International Conference on Intelligent
  Transportation Systems (ITSC)}, pages 677--682. IEEE, 2018.

\bibitem{ERMAGUN201796}
Alireza Ermagun, Yingling Fan, Julian Wolfson, Gediminas Adomavicius, and Kirti
  Das.
\newblock Real-time trip purpose prediction using online location-based search
  and discovery services.
\newblock {\em Transportation Research Part C: Emerging Technologies},
  77:96--112, 2017.

\bibitem{ester1996density}
Martin Ester, Hans-Peter Kriegel, J{\"o}rg Sander, Xiaowei Xu, et~al.
\newblock A density-based algorithm for discovering clusters in large spatial
  databases with noise.
\newblock In {\em kdd}, volume~96, pages 226--231, 1996.

\bibitem{ester1998incremental}
Martin Ester and R{\"u}diger Wittmann.
\newblock Incremental generalization for mining in a data warehousing
  environment.
\newblock In {\em International Conference on Extending Database Technology},
  pages 135--149. Springer, 1998.

\bibitem{filev2011contextual}
Dimitar Filev, Fling Tseng, J{\'o}hannes Kristinsson, and Ryan McGee.
\newblock Contextual on-board learning and prediction of vehicle destinations.
\newblock In {\em 2011 IEEE Symposium on Computational Intelligence in Vehicles
  and Transportation Systems (CIVTS) Proceedings}, pages 87--91, 2011.

\bibitem{froehlich2008route}
Jon Froehlich and John Krumm.
\newblock Route prediction from trip observations.
\newblock Technical report, SAE Technical Paper, 2008.

\bibitem{optimizetaxi}
Yong Gao, Dan Jiang, and Yan Xu.
\newblock Optimize taxi driving strategies based on reinforcement learning.
\newblock {\em International Journal of Geographical Information Science},
  32(8):1677--1696, 2018.

\bibitem{hubert1985comparing}
Lawrence Hubert and Phipps Arabie.
\newblock Comparing partitions.
\newblock {\em Journal of classification}, 2(1):193--218, 1985.

\bibitem{karlsson2013swedish}
Sten Karlsson.
\newblock The swedish car movement data project final report.
\newblock Technical report, Chalmers University of Technology, 2013.

\bibitem{niklas}
Niklas Åkerblom, Yuxin Chen, and Morteza Haghir~Chehreghani.
\newblock An online learning framework for energy-efficient navigation of
  electric vehicles.
\newblock In {\em Proceedings of the Twenty-Ninth International Joint
  Conference on Artificial Intelligence, {IJCAI-20}}, pages 2051--2057, 2020.

\bibitem{kleinberg2010regret}
Robert Kleinberg, Alexandru Niculescu-Mizil, and Yogeshwer Sharma.
\newblock Regret bounds for sleeping experts and bandits.
\newblock {\em Machine learning}, 80(2-3):245--272, 2010.

\bibitem{KUM2010258}
Dongsuk Kum, Huei Peng, and Norman~K. Bucknor.
\newblock Optimal control of plug-in hevs for fuel economy under various travel
  distances.
\newblock {\em IFAC Proceedings Volumes}, 43(7):258--263, 2010.
\newblock 6th IFAC Symposium on Advances in Automotive Control.

\bibitem{laasonen2005route}
Kari Laasonen.
\newblock Route prediction from cellular data.
\newblock In {\em Workshop on Context-Awareness for Proactive Systems (CAPS),
  Helsinki, Finland}, volume 1617, 2005.

\bibitem{6314910}
Viktor Larsson, Lars Johannesson, Bo~Egardt, and Anders Lassson.
\newblock Benefit of route recognition in energy management of plug-in hybrid
  electric vehicles.
\newblock In {\em 2012 American Control Conference (ACC)}, pages 1314--1320,
  2012.

\bibitem{nikulin2001hellinger}
Mikhail~S Nikulin.
\newblock Hellinger distance.
\newblock {\em Encyclopedia of mathematics}, 78, 2001.

\bibitem{panahandeh2017driver}
Ghazaleh Panahandeh.
\newblock Driver route and destination prediction.
\newblock In {\em 2017 IEEE Intelligent Vehicles Symposium (IV)}, pages
  895--900, 2017.

\bibitem{robusto1957cosine}
C~Carl Robusto.
\newblock The cosine-haversine formula.
\newblock {\em The American Mathematical Monthly}, 64(1):38--40, 1957.

\bibitem{rosenberg2007v}
Andrew Rosenberg and Julia Hirschberg.
\newblock V-measure: A conditional entropy-based external cluster evaluation
  measure.
\newblock In {\em Proceedings of the joint conference on empirical methods in
  natural language processing and computational natural language learning
  (EMNLP-CoNLL)}, pages 410--420, 2007.

\bibitem{simmons2006learning}
Reid Simmons, Brett Browning, Yilu Zhang, and Varsha Sadekar.
\newblock Learning to predict driver route and destination intent.
\newblock In {\em 2006 IEEE Intelligent Transportation Systems Conference},
  pages 127--132, 2006.

\bibitem{gathering}
Amin Vahedian, Xun Zhou, Ling Tong, Yanhua Li, and Jun Luo.
\newblock Forecasting gathering events through continuous destination
  prediction on big trajectory data.
\newblock In {\em Proceedings of the 25th ACM SIGSPATIAL International
  Conference on Advances in Geographic Information Systems}, SIGSPATIAL '17,
  New York, NY, USA, 2017. Association for Computing Machinery.

\bibitem{vinh2010information}
Nguyen~Xuan Vinh, Julien Epps, and James Bailey.
\newblock Information theoretic measures for clusterings comparison: Variants,
  properties, normalization and correction for chance.
\newblock {\em The Journal of Machine Learning Research}, 11:2837--2854, 2010.

\bibitem{XIAO2016447}
Guangnian Xiao, Zhicai Juan, and Chunqin Zhang.
\newblock Detecting trip purposes from smartphone-based travel surveys with
  artificial neural networks and particle swarm optimization.
\newblock {\em Transportation Research Part C: Emerging Technologies},
  71:447--463, 2016.

\bibitem{8103917}
Xiangrui Zeng and Junmin Wang.
\newblock Optimizing the energy management strategy for plug-in hybrid electric
  vehicles with multiple frequent routes.
\newblock {\em IEEE Transactions on Control Systems Technology},
  27(1):394--400, 2019.

\bibitem{zong2019trip}
Fang Zong, Yongda Tian, Yanan He, Jinjun Tang, and Jianyu Lv.
\newblock Trip destination prediction based on multi-day gps data.
\newblock {\em Physica A: Statistical Mechanics and its Applications},
  515:258--269, 2019.

\end{thebibliography}

\end{document}